\documentclass{article}

\usepackage{PRIMEarxiv}

\usepackage[utf8]{inputenc}
\usepackage{amsmath,amssymb,amsthm,mathtools}
\usepackage{booktabs}
\usepackage{graphicx}
\pdfmapfile{=pdftex35.map}
\pdfmapfile{+cm.map}
\pdfmapfile{+cmextra.map}
\pdfmapfile{+symbols.map}
\usepackage[protrusion=true,expansion=false]{microtype}
\usepackage{longtable}
\usepackage{multirow}
\usepackage{array}
\usepackage[table]{xcolor}
\usepackage{url}
\usepackage[section]{placeins}
\usepackage{enumitem}
\usepackage[numbers,sort&compress]{natbib}
\usepackage{hyperref}
\hypersetup{hidelinks,pdftitle={Drift Variation autoencoder: Unifying Generation and Representation Learning through Conditional Posterior Flow Matching},pdfauthor={Jiarui Cao}}
\graphicspath{{figures/}}

\newcommand{\R}{\mathbb R}
\newcommand{\E}{\mathbb E}
\newcommand{\KL}{\operatorname{KL}}

\newcommand{\cK}{\mathcal K}
\newcommand{\cL}{\mathcal L}
\newcommand{\cR}{\mathcal R}
\newcommand{\cT}{\mathcal T}

\newcommand{\bc}{\mathbf c}
\newcommand{\bz}{\mathbf z}
\newcommand{\beps}{\boldsymbol\varepsilon}
\newcommand{\bX}{\mathbf X}
\newcommand{\bY}{\mathbf Y}
\newcommand{\bC}{\mathbf C}
\newcommand{\bZ}{\mathbf Z}
\newcommand{\bA}{\mathbf A}
\newcommand{\bS}{\mathbf S}
\newcommand{\dd}{\mathrm d}
\newcommand{\norm}[1]{\left\lVert #1\right\rVert}
\newcommand{\indep}{\perp\!\!\!\perp}

\newcolumntype{L}[1]{>{\raggedright\arraybackslash}p{#1}}

\newtheorem{theorem}{Theorem}[section]
\newtheorem{lemma}[theorem]{Lemma}
\newtheorem{proposition}[theorem]{Proposition}
\newtheorem{corollary}[theorem]{Corollary}

\newtheorem{definition}[theorem]{Definition}

\title{Drift Variation Autoencoder:\\Unifying Generation and Representation Learning through Conditional Posterior Flow Matching}

\author{Jiarui Cao\\
The Chinese University of Hong Kong\\
\texttt{1155244613@link.cuhk.edu.hk}}
\date{}

\begin{document}
\maketitle

\begin{abstract}
Stochastic masking, cropping, or modality removal makes deterministic reconstruction an incomplete target: one observation can admit many clean completions. This work takes the corresponding posterior $P(X\mid C)$ as the common statistical object for conditional generation and generatively sufficient representation learning. Drift Variation autoencoder trains a masked encoder $Z=E(C)$ and a conditional flow decoder with one clean-prediction Flow Matching loss. The analysis first decomposes the ideal conditional KL into generator approximation and the representation deficiency $I(X;C\mid Z)$. It then derives orthogonal risk decompositions for conditional Flow Matching. For an affine Gaussian path, the clean-prediction representation gap is zero if and only if $P(X\mid Z)=P(X\mid C)$. Thus the encoder-dependent excess clean-prediction risk induced by Flow Matching and the profiled ideal conditional KL have the same posterior-sufficient zero set, without being numerically equal objectives. An exact conditional field with a zero-noise endpoint then generates $P(X\mid Z)$ and hence $P(X\mid C)$ at a joint ideal optimum. The result extends to continuous multimodal product spaces when the complete modality tuple remains the Flow target for every observation mask. On CrossGeom-4, an 18-run controlled benchmark, observable factors have linear-probe $R^2$ of $0.9990$-$0.9992$, shuffling the joint model's encoder condition increases conditional error by $13.5\times$-$15.7\times$, and joint target attention reduces disagreement on an unobserved factor shared by two outputs by $90.1$-$92.8\%$ relative to independent target decoders. Visible modalities are also generated and reconstructed, directly validating the full-tuple objective. Unconditional mode balance remains imperfect, delimiting the empirical claim to a controlled multimodal proof of concept.
\end{abstract}

\section{Introduction}

When an augmentation removes information, reconstruction is not a one-to-one problem. A crop can hide several plausible backgrounds; a masked signal can admit several continuations; and an observed modality can leave residual variables that two missing modalities must share. The statistically correct target is therefore not a single reconstruction but the posterior $P(X\mid C)$ of a clean sample $X$ given a stochastic observation $C=A(X,\Xi_A)$. This posterior offers a precise common target for two tasks that are usually trained separately: a generator should sample it, while an encoder representation should preserve exactly the information in $C$ needed to specify it.

Current pipelines typically optimize the two capabilities with distinct objectives. Self-supervised encoders use view agreement, masked reconstruction, or latent prediction~\cite{chen2020simple,he2022masked,assran2023self,oquab2023dinov2}, whereas diffusion and flow models estimate denoising scores or transport fields~\cite{ho2020denoising,song2020score,lipman2022flow}. A downstream system may then align, freeze, distill, or reconnect the components. This separation can be useful, but it does not make representation correctness a consequence of the same uncertainty modeled by the generator. Conversely, a conditional generator can reduce denoising loss through its noisy state without exposing a reusable representation. This work asks whether one ordinary generative objective can instead \emph{characterize and train} the representation it consumes.

Let $Z=E_\theta(C)$ be a deterministic representation and let $Q(X\mid Z)$ be a conditional generator. The ideal posterior-matching risk is
\begin{equation}
\cK(E,Q)=\E_C\KL\!\left(P(X\mid C)\,\|\,Q(X\mid E(C))\right).
\label{eq:intro-kl}
\end{equation}
Because $Z$ is a function of $C$, this risk decomposes exactly as
\begin{equation}
\cK(E,Q)=I(X;C\mid Z)+\E_Z\KL\!\left(P(X\mid Z)\,\|\,Q(X\mid Z)\right).
\label{eq:intro-kl-decomp}
\end{equation}
The generator must model the posterior available after compression, while the representation is correct exactly when $P(X\mid Z)=P(X\mid C)$. This condition is called \emph{posterior sufficiency}. It is a generative notion of representation quality, not a guarantee of semantic minimality, disentanglement, or invariance. An identity encoder can be sufficient; the observation process and bottleneck determine which sufficient representations are useful. Equation~\eqref{eq:intro-kl} defines the desired statistical object, but its unknown conditional densities prevent direct optimization.

Drift Variation autoencoder turns this posterior principle into a sample-based objective using Conditional Flow Matching (CFM)~\cite{lipman2022flow,lipman2024flow}. The training procedure draws a condition-independent Gaussian source $X_0$, forms an affine path $X_t=\alpha_tX+\sigma_tX_0$, and jointly trains the encoder and a clean-prediction decoder that receives only $(X_t,t,Z)$. The sample target is analytic and fixed by the observed data pair and probability path; no teacher, contrastive target, or generated reference set is required by the population objective.

The key difficulty is showing that CFM constrains the encoder rather than only the decoder. The analysis derives an orthogonal risk decomposition into irreducible path variance, representation deficiency, and model approximation. For clean prediction, the encoder-dependent term is
\begin{equation}
\Delta_{\rm rep}^{x}(E)=\E\!\left[w(t)\norm{\E[X\mid X_t,t,C]-\E[X\mid X_t,t,Z]}^2\right].
\label{eq:intro-rep-gap}
\end{equation}
For an affine Gaussian path with positive effective mass at interior noise levels, the analysis proves
\begin{equation}
\Delta_{\rm rep}^{x}(E)=0
\quad\Longleftrightarrow\quad
P(X\mid Z)=P(X\mid C)
\quad\Longleftrightarrow\quad
I(X;C\mid Z)=0.
\label{eq:intro-main}
\end{equation}
The proof recovers a noisy conditional score from the Gaussian posterior mean, then the smoothed density, and finally the clean posterior through injectivity of Gaussian convolution. This is an equivalence of ideal encoder zero sets, not an equality between the numerical CFM and KL objectives or their encoder gradients. With an exact representation-conditioned field, the conditional ODE generates $P(X\mid Z)$; Eq.~\eqref{eq:intro-main} closes the loop to $P(X\mid C)$.

The formulation extends to a multimodal tuple $\bX=(X^{(1)},\ldots,X^{(M)})$. An observation mask controls which clean modalities enter the encoder, but the Flow target remains the complete tuple for every mask. Each modality receives source noise, contributes a positive loss weight, and evolves in the joint decoder. The product-space theorem then identifies the complete joint posterior, including residual dependence between simultaneously generated modalities. The evaluation tests this prediction on CrossGeom-4, a controlled three-modality distribution with known shared factors and Euclidean, spherical, and hyperbolic views. Joint and independent decoders have similar deterministic conditional accuracy, but only joint target attention coordinates the same unobserved posterior draw across two outputs.

The closest theoretical comparison is Zero-Flow Encoders~\cite{wang2026zero}, which constructs a separate sufficient-representation loss from a midpoint zero-flow criterion under an independent coupling. Drift Variation autoencoder instead studies the ordinary conditional decoder used for generation: its clean-prediction risk itself induces the encoder criterion, and Gaussian denoiser identifiability links that criterion to posterior sufficiency.

The main contributions are:
\begin{itemize}[leftmargin=2em,itemsep=3pt]
\item \textbf{Posterior principle.} Generative representation correctness is defined by $P(X\mid Z)=P(X\mid C)$, and the ideal conditional KL is decomposed into representation deficiency and generator approximation.
\item \textbf{Flow risk and sufficiency.} Exact velocity and clean-prediction risk decompositions are derived, and it is proved that the encoder-dependent clean-prediction representation gap induced by affine-Gaussian CFM and the profiled conditional KL share the same posterior-sufficient zero set.
\item \textbf{Endpoint and multimodal guarantees.} An exact field is connected to posterior generation, and the result is extended to complete continuous multimodal tuples under any observed subset.
\item \textbf{Controlled validation.} CrossGeom-4 measures representation use, visible-stream reconstruction, conditional completion, residual joint coupling, and unconditional coverage across three geometries and three seeds.
\end{itemize}

\section{Related Work}

\paragraph{Self-supervised representation learning.} Contrastive and self-distillation methods align augmented views, while masked and predictive methods recover pixels, tokens, or target embeddings \cite{chen2020simple,caron2021emerging,he2022masked,assran2023self,oquab2023dinov2}. MAE shows that high-ratio masking can learn scalable visual features, whereas I-JEPA deliberately predicts in representation space to emphasize semantic structure~\cite{he2022masked,assran2023self}. These methods choose an invariance or prediction target. Drift Variation autoencoder instead defines a generatively sufficient representation by the clean-data posterior induced by an observation. It does not require all views of one sample to have identical features and does not claim minimal or semantic sufficiency.

\paragraph{Generation and representation.} Denoising autoencoders connect reconstruction fields to data scores \cite{vincent2011connection,alain2014regularized}. Diffusion Autoencoders \cite{preechakul2022diffusion}, representation learning with diffusion \cite{traub2022representation,mittal2023diffusion}, and DDAE \cite{xiang2023denoising} study representations learned with or extracted from generative models; later analysis tracks how diffusion representations change with noise and training~\cite{li2026understanding}. In the other direction, REPA aligns generative hidden states with a pretrained encoder \cite{yu2024representation}. Representation Autoencoders and RepTok import or adapt self-supervised latent spaces for efficient generation \cite{zheng2026diffusion,gui2026adapting}. These approaches demonstrate the value of semantic geometry but retain an external representation source, an auxiliary alignment objective, or a separate latent-space construction. Drift Variation autoencoder asks when the conditional generative loss itself identifies the required representation.

\paragraph{Flow Matching and conditional sufficiency.} Flow Matching regresses vector fields of fixed probability paths, and Conditional Flow Matching replaces an inaccessible marginal field by tractable conditional targets with the same population minimizer for the field model \cite{lipman2022flow,lipman2024flow}. Rectified Flow studies straight couplings and efficient transport~\cite{liu2022flow}. FlowFM jointly trains an encoder and conditional flow on wearable signals \cite{ukita2025high}; Self-Flow adds dual-timestep information asymmetry and a self-supervised feature objective for scalable multimodal synthesis \cite{chefer2026self}; and Symmetrical Flow Matching combines opposing flows with supervised tasks~\cite{caetano2026symmetrical}. Closest to the theory developed here, Zero-Flow Encoders relate a midpoint zero-flow criterion to conditional independence and construct a separate representation loss \cite{wang2026zero}. Drift Variation autoencoder instead proves that, after profiling the decoder, the encoder-dependent excess clean-prediction risk of the same conditional decoder used for generation has posterior-sufficient encoders as its ideal zero set. Its proof uses Gaussian denoiser identifiability rather than a midpoint zero-flow criterion. Finite-error bounds from flow-field error to terminal KL require additional regularity \cite{su2025flow}; the present result concerns exact zero sets and endpoints.

\paragraph{Multimodal learning.} MultiMAE and 4M randomize masked inputs and targets across modalities, with 4M supporting flexible tokenized conditional generation \cite{bachmann2022multimae,mizrahi20234m}. Chameleon, Transfusion, and Janus combine multimodal understanding and generation through early fusion, autoregressive-diffusion objectives, or decoupled visual encoders \cite{team2024chameleon,zhou2025transfusion,wu2025janus}. Drift Variation autoencoder contributes a distribution-level abstraction for continuous latents: any observed subset defines the condition, while all modality streams remain one joint Flow target. This distinction makes cross-output residual dependence, rather than only per-modality fidelity, an explicit empirical requirement.

\section{Posterior Flow Matching for Representation Learning}
\label{sec:method}

\subsection{The augmentation posterior is the common target}

Let $X\in\R^d$ follow the data distribution, let $\Xi_A$ denote augmentation randomness, and define the observed context and deterministic representation by
\begin{equation}
C=A(X,\Xi_A),\qquad Z=E_\theta(C).
\label{eq:setup}
\end{equation}
The augmentation may mask patches, crop spatial support, alter photometric statistics, remove a modality, or combine several such operations. If it is not invertible, $P(X\mid C=c)$ need not be a point mass. The modeling target is therefore the complete conditional law rather than one reconstruction.

\begin{definition}[Posterior-sufficient representation]
\label{def:sufficient}
An encoder $E$ is sufficient for the clean target $X$ relative to the observation $C$ when, for $Z=E(C)$,
\begin{equation}
P(X\mid Z)=P(X\mid C)\qquad\text{almost surely}.
\label{eq:sufficiency}
\end{equation}
Equivalently, $X\indep C\mid Z$ or $I(X;C\mid Z)=0$ whenever the conditional mutual information is well defined.
\end{definition}

Sufficiency is deliberately weaker than minimal sufficiency. The identity encoder $Z=C$ can be sufficient, and nothing in Definition~\ref{def:sufficient} alone forces semantic compression, disentanglement, or invariance to every augmentation. Those properties must come from the observation process, an architectural bottleneck, regularization, or an additional criterion.

Consider the ideal conditional distribution-matching objective
\begin{equation}
\cK(E,Q)=\E_C\KL\!\left(P(X\mid C)\,\|\,Q(X\mid Z)\right),\qquad Z=E(C).
\label{eq:ideal-kl}
\end{equation}

\begin{proposition}[Conditional KL decomposition]
\label{prop:kl-decomp}
If the conditional laws and displayed KL divergences exist, then
\begin{equation}
\boxed{\cK(E,Q)=I(X;C\mid Z)+\E_Z\KL\!\left(P(X\mid Z)\,\|\,Q(X\mid Z)\right).}
\label{eq:kl-decomp}
\end{equation}
Consequently, for an unrestricted conditional generator family,
\begin{equation}
\inf_Q\cK(E,Q)=I(X;C\mid Z),
\label{eq:kl-profile}
\end{equation}
and the infimum is zero exactly for posterior-sufficient encoders.
\end{proposition}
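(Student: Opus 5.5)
The plan is to prove the identity with a chain rule for KL divergence, obtained by inserting the intermediate law $P(X\mid Z)$ into the log-ratio.

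\textbf{Step 1: split the log-ratio.} Since $Z=E(C)$ is a measurable function of $C$, conditioning on $C$ fixes $Z$. Assuming densities (or Radon--Nikodym derivatives) exist, for each $c$ with $z=E(c)$ I write
\begin{equation*}
\log\frac{p(x\mid c)}{q(x\mid z)}=\log\frac{p(x\mid c)}{p(x\mid z)}+\log\frac{p(x\mid z)}{q(x\mid z)} .
\end{equation*}
I then integrate against $P(X\mid C=c)$ and afterwards against the law of $C$.

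\textbf{Step 2: identify the first term.} The first piece gives $\E_C\KL(P(X\mid C)\,\|\,P(X\mid Z))$. By definition of conditional mutual information, this equals $I(X;C\mid Z)$.

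\textbf{Step 3: identify the second term.} The second piece is a function of $(X,Z)$ only. I use the tower property: conditional on $Z=z$, the average over $C$ of $P(X\mid C)$ restricted to $\{E(C)=z\}$ is $P(X\mid Z=z)$. This turns the second piece into $\E_Z\KL(P(X\mid Z)\,\|\,Q(X\mid Z))$.

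\textbf{Main obstacle: justifying the split of the expectation.} This is the only delicate step; everything else is bookkeeping.
\begin{itemize}
\item The hypothesis that the displayed KLs exist and are finite, together with absolute continuity $P(\cdot\mid C)\ll P(\cdot\mid Z)\ll Q(\cdot\mid Z)$, makes both log-ratio integrals well defined. Linearity then applies without an $\infty-\infty$ issue.
\item If $\cK(E,Q)=\infty$, I would argue the identity holds in $[0,\infty]$ by nonnegativity. To do this, I would take the KL of the joint conditional laws and use the chain rule for KL over the kernel $C\mapsto Z$.
\item The absolute continuity $P(\cdot\mid C)\ll P(\cdot\mid Z)$ holds automatically a.s. because $P(X\mid Z)$ is a mixture of the $P(X\mid C)$.
\end{itemize}

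\textbf{Step 4: profile over $Q$.} Only the second term depends on $Q$, and it is nonnegative. Choosing $Q(X\mid Z)=P(X\mid Z)$ is allowed in an unrestricted family and makes that term zero. Hence $\inf_Q\cK(E,Q)=I(X;C\mid Z)$, and the infimum is attained.

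\textbf{Step 5: zero set.} $I(X;C\mid Z)=\E_C\KL(P(X\mid C)\,\|\,P(X\mid Z))=0$ holds iff $P(X\mid C)=P(X\mid Z)$ almost surely, since KL vanishes only for equal laws. By Definition~\ref{def:sufficient}, this is exactly posterior sufficiency.
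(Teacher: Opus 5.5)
Your route is the paper's proof. You split the Radon--Nikodym derivative through $P(X\mid Z)$ and identify $\E_C\KL(P(X\mid C)\,\|\,P(X\mid Z))$ with $I(X;C\mid Z)$; this identification rests on $P(X\mid C,Z)=P(X\mid C)$ for deterministic $Z=E(C)$, which the paper records as Eq.~\eqref{eq:cmi-kernel}. You then reduce the second term by conditioning on $Z$ and profile over $Q$ by nonnegativity, and Steps 1--5 are sound.

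\textbf{The third bullet is false.} $P(X\mid C)\ll P(X\mid Z)$ does not hold automatically almost surely. The mixture representation only gives the following: for each \emph{fixed} Borel set $B$ with $P(X\in B\mid Z=z)=0$, one has $P(X\in B\mid C=c)=0$ for $P(\cdot\mid Z=z)$-almost every $c$. The exceptional $c$-set depends on $B$, and there are uncountably many $B$. For a counterexample, take $C=X\sim\mathrm{Unif}[0,1]$ and a constant encoder. Then $P(X\mid C=c)=\delta_c$ is singular with respect to $P(X\mid Z)=\mathrm{Unif}[0,1]$ for every $c$, and $I(X;C\mid Z)=\infty$.

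The remark is not load-bearing. When $I(X;C\mid Z)<\infty$, absolute continuity follows from finiteness of that KL. Moreover, the chain-rule argument in your second bullet already proves the identity in $[0,\infty]$ in every case at once. Consider $\KL$ between $P_{XC}$ and the measure $P_C(\dd c)\,Q(\dd x\mid E(c))$:
\begin{itemize}
\item Disintegrating along $C$ gives $\cK(E,Q)$.
\item Disintegrating along $(X,Z)$, where $X$ and $C$ are conditionally independent given $Z$ under the reference measure, gives $\E_Z\KL(P(X\mid Z)\,\|\,Q(X\mid Z))+I(X;C\mid Z)$.
\end{itemize}
Neither step needs any a priori absolute continuity. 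So drop the third bullet and let this chain rule carry the extended-real case. It is also more explicit than the paper's ``handled in the usual way''.
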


Proposition~\ref{prop:kl-decomp} makes the desired synchronization explicit. The generator must model the law available after compression, $P(X\mid Z)$, while the encoder is correct only if this law retains the posterior specified by the original observation. Direct evaluation of Eq.~\eqref{eq:ideal-kl} is unavailable from ordinary joint samples because $P(X\mid C)$ has no tractable density. The next step constructs a probability path whose regression objective is sample based.

\subsection{A conditional probability path from common noise}

Draw a source independent of both data and augmentation,
\begin{equation}
X_0=s_0\varepsilon,\qquad \varepsilon\sim\mathcal N(0,I_d),\qquad X_0\indep(X,C),\qquad s_0>0.
\label{eq:source}
\end{equation}
For differentiable scalar schedules $\alpha_t$ and $\sigma_t$, define
\begin{equation}
X_t=\alpha_tX+\sigma_tX_0,
\qquad
U_t=\dot\alpha_tX+\dot\sigma_tX_0.
\label{eq:path-general}
\end{equation}
The principal affine path uses
\begin{equation}
\alpha_t=t,\qquad \sigma_t=1-(1-\sigma_{\min})t,
\label{eq:affine-schedules}
\end{equation}
so that
\begin{equation}
X_t=tX+\sigma_tX_0,\qquad U_t=X-(1-\sigma_{\min})X_0=\frac{X-(1-\sigma_{\min})X_t}{\sigma_t}.
\label{eq:affine-velocity}
\end{equation}
At $t=0$, every condition shares $P_0=\mathcal N(0,s_0^2I)$. If $\sigma_{\min}=0$, then $X_1=X$ and the endpoint conditioned on $C$ is exactly $P(X\mid C)$. If $\sigma_{\min}>0$, the endpoint is the Gaussian-smoothed posterior $P(X\mid C)*\mathcal N(0,\sigma_{\min}^2s_0^2I)$.

For a fixed clean endpoint $x$, Eq.~\eqref{eq:path-general} defines the tractable bridge
\begin{equation}
p_t(x_t\mid x)=\mathcal N\!\left(x_t;\alpha_tx,\sigma_t^2s_0^2I\right).
\label{eq:bridge}
\end{equation}
Marginalizing the endpoint over the augmentation posterior gives the actual condition-level path
\begin{equation}
p_t(x_t\mid c)=\int p_t(x_t\mid x)p(x\mid c)\,\dd x.
\label{eq:conditional-path}
\end{equation}
Thus one training pair $(c,x)$ is a Monte Carlo observation from the endpoint mixture, not a declaration that $P(X\mid C=c)$ is the Dirac mass at $x$.

Define the marginal conditional velocity
\begin{equation}
u_C(x_t,t,c)=\E[U_t\mid X_t=x_t,t,C=c].
\label{eq:u-c}
\end{equation}

\begin{proposition}[Conditional path field]
\label{prop:conditional-path}
Assume that differentiation can be exchanged with endpoint marginalization and that the corresponding continuity equations are well defined. Then $u_C$ satisfies
\begin{equation}
\partial_t p_t(x_t\mid c)+\nabla_{x_t}\!\cdot\!\left(p_t(x_t\mid c)u_C(x_t,t,c)\right)=0.
\label{eq:continuity-c}
\end{equation}
Moreover, for any square-integrable predictor $v(X_t,t,C)$,
\begin{equation}
\E\norm{v-U_t}^2=\E\norm{v-u_C}^2+\E\norm{U_t-u_C}^2.
\label{eq:cfm-fm}
\end{equation}
The second term is independent of $v$, so the sample CFM target and the inaccessible marginal-field target have the same population minimizer.
\end{proposition}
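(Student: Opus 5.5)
The plan has two independent parts, matching the two claims of Proposition~\ref{prop:conditional-path}: a continuity equation obtained by marginalizing the bridge equation, and an orthogonal decomposition of the regression risk. Both parts rest on the fact that the source $X_0$ in Eq.~\eqref{eq:source} is independent of $(X,C)$. This independence has two consequences. Conditional on $X=x$ and $C=c$, the path $X_t$ has the bridge law of Eq.~\eqref{eq:bridge}, which does not depend on $c$. The condition-level path of Eq.~\eqref{eq:conditional-path} is therefore an endpoint mixture of bridges.

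\textbf{Continuity equation.} First I verify the single-bridge equation
\[
\partial_t p_t(x_t\mid x)+\nabla_{x_t}\cdot\bigl(p_t(x_t\mid x)\,u_t(x_t\mid x)\bigr)=0,
\]
where the bridge field is $u_t(x_t\mid x)=\dot\alpha_t x+\dot\sigma_t(x_t-\alpha_t x)/\sigma_t$. This field is $\E[U_t\mid X_t=x_t,X=x]$, because on the bridge the source is determined by $X_0=(X_t-\alpha_tX)/\sigma_t$. The equation itself holds because the flow map $x_0\mapsto\alpha_tx+\sigma_tx_0$ pushes $P_0$ forward to the bridge law and has velocity $u_t$; equivalently, one can check it directly from the Gaussian density. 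For $\sigma_t>0$ this step is routine.

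Next I multiply by $p(x\mid c)$ and integrate over $x$. The assumed exchange of differentiation with endpoint marginalization gives
\[
\partial_t p_t(x_t\mid c)+\nabla\cdot\int p_t(x_t\mid x)\,u_t(x_t\mid x)\,p(x\mid c)\,\dd x=0.
\]
By Bayes' rule and the conditional independence of $X_t$ from $C$ given $X$, the integrand equals
\[
p_t(x_t\mid c)\;p(x\mid x_t,t,c)\;\E[U_t\mid X_t=x_t,X=x,C=c].
\]
Integrating over $x$ therefore yields $p_t(x_t\mid c)\,u_C(x_t,t,c)$, with $u_C$ as defined in Eq.~\eqref{eq:u-c}. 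The one point needing care is that the inner conditional expectation given $C$ coincides with the bridge field; this again follows from $X_0\indep(X,C)$.

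\textbf{Risk decomposition.} I expand
\[
\E\norm{v-U_t}^2=\E\norm{v-u_C}^2+\E\norm{U_t-u_C}^2+2\,\E\langle v-u_C,\;u_C-U_t\rangle.
\]
To kill the cross term, I condition on $(X_t,t,C)$. Both $v$ and $u_C$ are measurable with respect to this $\sigma$-field, and $\E[U_t-u_C\mid X_t,t,C]=0$ by the definition of $u_C$. Square-integrability makes the inner product integrable (Cauchy--Schwarz), which justifies the tower property. The remaining term $\E\norm{U_t-u_C}^2$ does not involve $v$, so the minimizer over measurable $v$ is $u_C$ almost everywhere.

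\textbf{Main obstacle.} I expect the main difficulty to be the regularity in the first part: exchanging $\partial_t$ and $\nabla$ with the endpoint integral, and handling the endpoint $t=1$ when $\sigma_{\min}=0$, where the bridge degenerates. The plan is to state the continuity equation only in the weak sense on $t\in[0,1)$, which is what the stated assumption covers.
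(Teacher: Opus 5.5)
Your proposal is correct and follows essentially the same route as the paper. You marginalize the bridge continuity equation over $P(\dd x\mid c)$, recognize the Bayes-weighted average of bridge fields as $u_C=\E[U_t\mid X_t,t,C]$, and eliminate the cross term using $\E[U_t-u_C\mid X_t,t,C]=0$; your explicit appeal to $X_0\indep(X,C)$ to identify the inner conditional expectation with the bridge field makes precise a step the paper leaves implicit.
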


The proof is the usual conditional path marginalization and conditional-expectation projection, with the external context $C$ held fixed~\cite{lipman2022flow,lipman2024flow}. It requires no repeated occurrence of an exactly identical continuous observation $c$ in the dataset.

\subsection{The encoder bottleneck exposes representation deficiency}

Drift Variation autoencoder does not give $C$ directly to the flow decoder. The learned predictor has the restricted form
\begin{equation}
v_{\theta,\phi}(X_t,t,C)=v_\phi(X_t,t,E_\theta(C))=v_\phi(X_t,t,Z).
\label{eq:bottleneck}
\end{equation}
For a fixed encoder, define the best field available through its representation,
\begin{equation}
u_Z(X_t,t,Z)=\E[U_t\mid X_t,t,Z]=\E[u_C(X_t,t,C)\mid X_t,t,Z].
\label{eq:u-z}
\end{equation}

\begin{theorem}[Velocity CFM risk decomposition]
\label{thm:velocity-decomp}
For every encoder and field predictor for which the displayed second moments are finite,
\begin{align}
\cL_{\rm CFM}(E,v)
&:=\E\norm{v(X_t,t,Z)-U_t}^2\nonumber\\
&=\underbrace{\E\norm{U_t-u_C}^2}_{\cL_{\rm path}}
+\underbrace{\E\norm{u_C-u_Z}^2}_{\Delta_{\rm rep}^{v}(E)}
+\underbrace{\E\norm{u_Z-v(X_t,t,Z)}^2}_{\Delta_{\rm model}^{v}(E,v)}.
\label{eq:velocity-decomp}
\end{align}
The path term depends only on the data, augmentation, source, and schedules. With an unrestricted decoder, $v^*=u_Z$ and $\inf_v\cL_{\rm CFM}(E,v)=\cL_{\rm path}+\Delta_{\rm rep}^{v}(E)$.
\end{theorem}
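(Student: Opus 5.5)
The decomposition is a nested orthogonal projection in $L^2$. We work on the probability space carrying $(X,\Xi_A,X_0,t)$, with $C=A(X,\Xi_A)$, $Z=E(C)$, and $X_t,U_t$ as in Eq.~\eqref{eq:path-general}. We use the two $\sigma$-algebras $\mathcal G_Z=\sigma(X_t,t,Z)\subseteq\mathcal G_C=\sigma(X_t,t,C)$; the inclusion holds because $Z$ is a measurable function of $C$. By definition $u_C=\E[U_t\mid\mathcal G_C]$. The tower property then gives $u_Z=\E[U_t\mid\mathcal G_Z]=\E[u_C\mid\mathcal G_Z]$, which is the second equality in Eq.~\eqref{eq:u-z}. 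Any admissible predictor $v(X_t,t,Z)$ is $\mathcal G_Z$-measurable. The finite second-moment assumption, together with conditional Jensen, places $U_t$, $u_C$, $u_Z$ and $v$ in $L^2$.

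Write $v-U_t=(v-u_Z)+(u_Z-u_C)+(u_C-U_t)$ and expand the squared norm. We show each cross term vanishes by conditioning.
\begin{itemize}
\item For $\E\langle v-u_Z,\,u_C-U_t\rangle$ and $\E\langle u_Z-u_C,\,u_C-U_t\rangle$, the first factor is $\mathcal G_C$-measurable, since $\mathcal G_Z\subseteq\mathcal G_C$. Conditioning on $\mathcal G_C$ kills the term because $\E[u_C-U_t\mid\mathcal G_C]=0$.
\item For $\E\langle v-u_Z,\,u_Z-u_C\rangle$, the first factor is $\mathcal G_Z$-measurable. Conditioning on $\mathcal G_Z$ kills the term because $\E[u_Z-u_C\mid\mathcal G_Z]=0$.
\end{itemize}
Each of these steps is justified by Cauchy--Schwarz integrability of the products. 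This leaves exactly the three squared terms in Eq.~\eqref{eq:velocity-decomp}. Alternatively, apply Proposition~\ref{prop:conditional-path}'s identity Eq.~\eqref{eq:cfm-fm} with the predictor $v(X_t,t,E(C))$, which is a function of $(X_t,t,C)$. This produces $\cL_{\rm path}+\E\norm{v-u_C}^2$, and the same projection argument one level down splits $\E\norm{v-u_C}^2$ into $\Delta^v_{\rm rep}+\Delta^v_{\rm model}$.

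For the remaining claims, $\cL_{\rm path}$ involves only $U_t$ and $u_C$. These are determined by the joint law of $(X,C,X_0,t)$ and the schedules, with no dependence on $E$ or $v$. Given $E$, the term $\Delta^v_{\rm rep}(E)$ is also free of $v$. The term $\Delta^v_{\rm model}\ge0$ vanishes for $v=u_Z$, which lies in the admissible class when the decoder is unrestricted, as a measurable function of $(X_t,t,Z)$ by Doob--Dynkin. Hence $v^*=u_Z$ almost surely and $\inf_v\cL_{\rm CFM}=\cL_{\rm path}+\Delta^v_{\rm rep}(E)$.

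There is no real obstacle here. The only care needed is measure-theoretic bookkeeping:
\begin{itemize}
\item treating $t$ as a random variable, drawn from a weight distribution, that is included in the conditioning;
\item checking integrability of the cross terms;
\item invoking Doob--Dynkin so that $u_Z$ is an admissible field.
\end{itemize}
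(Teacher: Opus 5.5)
Your proposal is correct and follows essentially the same route as the paper's appendix proof: the nested inclusion $\sigma(X_t,t,Z)\subseteq\sigma(X_t,t,C)$, the tower identity $u_Z=\E[u_C\mid X_t,t,Z]$, the split $U_t-v=(U_t-u_C)+(u_C-u_Z)+(u_Z-v)$, and vanishing of every cross term by conditional orthogonality. Your extra bookkeeping makes explicit what the paper leaves implicit: Cauchy--Schwarz integrability of the cross terms, and Doob--Dynkin admissibility of $u_Z$ for the infimum claim.
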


The middle term is the information that was useful for the correct conditional velocity but lost when $C$ was replaced by $Z$. The final term is distinct: a finite decoder can fail even when the representation is sufficient, or it can prefer an easier but insufficient representation during joint optimization. Theorem~\ref{thm:velocity-decomp} is therefore a population accounting identity, not a promise about finite-network optimization.

\subsection{Clean prediction and posterior sufficiency}

In practice, the decoder is parameterized as a clean predictor
\begin{equation}
\widehat X=D_\phi(X_t,t,Z).
\label{eq:xpred}
\end{equation}
For the affine path, it induces the velocity
\begin{equation}
v_\phi(X_t,t,Z)=\frac{D_\phi(X_t,t,Z)-(1-\sigma_{\min})X_t}{\sigma_t}.
\label{eq:xpred-to-v}
\end{equation}
Exact velocity CFM is therefore equivalent away from $\sigma_t=0$ to a weighted clean-prediction loss with $w_{\rm vel}(t)=\sigma_t^{-2}$. The formulation allows any measurable positive time weight and defines
\begin{equation}
\cR_w(E,D)=\E\!\left[w(t)\norm{D(X_t,t,Z)-X}^2\right].
\label{eq:xpred-risk}
\end{equation}
Let
\begin{equation}
m_C(x_t,t,c)=\E[X\mid X_t=x_t,t,C=c],\qquad m_Z(x_t,t,z)=\E[X\mid X_t=x_t,t,Z=z].
\label{eq:denoisers}
\end{equation}

\begin{theorem}[Clean-prediction risk decomposition]
\label{thm:xpred-decomp}
If the displayed second moments are finite, then
\begin{align}
\cR_w(E,D)
&=\underbrace{\E\!\left[w(t)\norm{X-m_C}^2\right]}_{\cR_{\rm path}}
+\underbrace{\E\!\left[w(t)\norm{m_C-m_Z}^2\right]}_{\Delta_{\rm rep}^{x}(E)}\nonumber\\
&\quad+\underbrace{\E\!\left[w(t)\norm{m_Z-D(X_t,t,Z)}^2\right]}_{\Delta_{\rm model}^{x}(E,D)}.
\label{eq:xpred-decomp}
\end{align}
For a fixed encoder, the pointwise Bayes predictor is $D_E^*=m_Z$ wherever the effective time weight is positive.
\end{theorem}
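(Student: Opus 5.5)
The plan is the standard nested-projection argument in $L^2$, applied with the weight $w(t)$ absorbed into the measure. Work on the joint probability space of $(X,\Xi_A,X_0,t)$. Let $\mu_w$ denote the measure with density $w(t)$ relative to the underlying law; all displayed second moments are finite with respect to $\mu_w$ by hypothesis.

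Introduce the two $\sigma$-fields
\[
\mathcal G_C=\sigma(X_t,t,C),\qquad \mathcal G_Z=\sigma(X_t,t,Z).
\]
Since $Z=E(C)$ is a measurable function of $C$, we have $\mathcal G_Z\subseteq\mathcal G_C$. By definition, $m_C=\E[X\mid\mathcal G_C]$ and $m_Z=\E[X\mid\mathcal G_Z]$. The tower property then gives $m_Z=\E[m_C\mid\mathcal G_Z]$. The weight $w(t)$ is $\mathcal G_Z$-measurable, since $t$ is in both $\sigma$-fields. Hence conditional expectations under the original law and under $\mu_w$ agree wherever the effective weight is positive. Concretely, $\E[w(t)f\mid \mathcal G]=w(t)\E[f\mid\mathcal G]$, so no re-weighting of the conditional expectations is needed.

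Write
\[
D-X=(D-m_Z)+(m_Z-m_C)+(m_C-X),
\]
expand $\E[w(t)\norm{D-X}^2]$, and show that the three cross terms vanish.

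\begin{itemize}
\item For $\E[w\langle D-m_Z,\,m_C-X\rangle]$: the factor $w(D-m_Z)$ is $\mathcal G_Z\subseteq\mathcal G_C$-measurable, and $\E[m_C-X\mid\mathcal G_C]=0$.
\item For $\E[w\langle m_Z-m_C,\,m_C-X\rangle]$: the factor $w(m_Z-m_C)$ is $\mathcal G_C$-measurable, and the same orthogonality applies.
\item For $\E[w\langle D-m_Z,\,m_Z-m_C\rangle]$: condition on $\mathcal G_Z$ and use $\E[m_Z-m_C\mid\mathcal G_Z]=0$, which is the tower identity above.
\end{itemize}

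Each cross term is justified by Cauchy–Schwarz under the finite weighted second moments, which makes the inner products integrable so that conditioning is legitimate. The expansion then yields the three-term identity \eqref{eq:xpred-decomp}. This is exactly the same structure as Theorem~\ref{thm:velocity-decomp}, with $U_t$ replaced by $X$ and with the weight included.

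For the Bayes predictor claim, note that only $\Delta^x_{\rm model}$ depends on $D$. Since $D$ is any measurable function of $(X_t,t,Z)$, that term is minimized, with value zero, by $D=m_Z$. Uniqueness holds $\mu_w$-almost everywhere, that is, pointwise on the set where $w(t)>0$ and the path has mass.

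The main obstacle is the integrability and measurability bookkeeping, not the algebra. We must ensure that $m_C$ and $m_Z$ are versions defined under the $w$-weighted measure. We must also ensure that the cross-term expectations exist. If $w$ is unbounded near $\sigma_t=0$, as with $w_{\rm vel}$, only the weighted moments are assumed finite. I would handle this by stating the decomposition in $L^2(\mu_w)$ and treating the conditional expectations as $L^2(\mu_w)$ orthogonal projections onto the closed subspaces of $\mathcal G_C$- and $\mathcal G_Z$-measurable functions. In that setting the nested-subspace Pythagorean identity gives the result directly.
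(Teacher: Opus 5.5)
Your proposal is correct and follows the paper's proof (Appendix~\ref{app:risk-decompositions}) essentially step for step: the nested $\sigma$-fields $\sigma(X_t,t,Z)\subseteq\sigma(X_t,t,C)$, the tower identity $m_Z=\E[m_C\mid X_t,t,Z]$, the three-term split of $X-D$ under the $w$-weighted inner product, and the observation that the weight, being a function of $t$, leaves the conditional expectations unchanged. Your treatment of the cross terms in $L^2(\mu_w)$ via Cauchy--Schwarz, which also covers an unbounded weight such as $\sigma_t^{-2}$, only makes explicit the integrability details that the paper leaves implicit.
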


This result explains why Eq.~\eqref{eq:xpred-risk} is not ordinary masked autoencoding even though both can use squared error. A deterministic reconstructor $g(C)$ has the single Bayes target $\E[X\mid C]$. Drift Variation autoencoder learns the family $m_Z(x_t,t,z)$ over noisy states and interior times. For Gaussian corruption, that family identifies the complete conditional distribution.

\begin{theorem}[Gaussian flow representation sufficiency]
\label{thm:main-sufficiency}
Let $Z=E(C)$, let $X_0=s_0\varepsilon$ with $s_0>0$ and $\varepsilon\sim\mathcal N(0,I_d)$ independent of $(X,C)$, and let $X_t=\alpha_tX+\sigma_tX_0$. Let $t\sim\rho$ be independent of $(X,C,X_0)$ and define $\cT_{\rm int}=\{t:\alpha_t>0,\ \sigma_t>0,\ 0<w(t)<\infty\}$. Assume $\rho(\cT_{\rm int})>0$, regular conditional probabilities exist, the conditional second moments are finite, and the density and conditional-expectation operations in Appendix~\ref{app:sufficiency-proof} are valid. Then
\begin{equation}
\boxed{\Delta_{\rm rep}^{x}(E)=0\quad\Longleftrightarrow\quad P(X\mid Z)=P(X\mid C)\ \text{almost surely}\quad\Longleftrightarrow\quad I(X;C\mid Z)=0.}
\label{eq:main-sufficiency}
\end{equation}
\end{theorem}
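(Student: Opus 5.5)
The second equivalence, between $P(X\mid Z)=P(X\mid C)$ and $I(X;C\mid Z)=0$, is the standard characterization of conditional independence; it follows from Definition~\ref{def:sufficient} and Proposition~\ref{prop:kl-decomp}, since $I(X;C\mid Z)=\E_C\KL(P(X\mid C)\,\|\,P(X\mid Z))$ vanishes exactly when the two conditionals agree almost surely. The work is therefore in the first equivalence. I will prove each direction separately, using the tower structure from Theorem~\ref{thm:xpred-decomp}.

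\emph{Sufficiency implies zero gap.} Because $X_0$ and $t$ are independent of $(X,C)$ and $Z$ is a function of $C$, the joint law of $(X,X_t,t)$ given $C$ is obtained by pushing $P(X\mid C)$ through the Gaussian bridge~\eqref{eq:bridge}. If $P(X\mid C)=P(X\mid Z)$, this conditional joint law depends on $C$ only through $Z$. Hence $m_C(X_t,t,C)=\E[X\mid X_t,t,Z]=m_Z$ almost surely, and $\Delta_{\rm rep}^x(E)=0$.

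\emph{Zero gap implies sufficiency.} Since $\rho(\cT_{\rm int})>0$ and $w>0$ on $\cT_{\rm int}$, the condition $\Delta_{\rm rep}^x=0$ forces $m_C=m_Z$ almost surely on $\cT_{\rm int}$. I then fix one such $t$, chosen so the equality holds for $P_C$-almost every $c$ and Lebesgue-almost every $x_t$. Positivity of the Gaussian density makes the marginal of $X_t$ equivalent to Lebesgue measure, so the equality can be upgraded to hold Lebesgue-a.e., and by continuity everywhere.

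The key tool is the Tweedie identity for the smoothed conditional density $p_t(x_t\mid c)$ in~\eqref{eq:conditional-path}:
\[
\nabla_{x_t}\log p_t(x_t\mid c)=\frac{\alpha_t m_C(x_t,t,c)-x_t}{\sigma_t^2 s_0^2}.
\]
The same identity holds with $z$ in place of $c$. Equality of the denoisers therefore gives
\[
\nabla\log p_t(\cdot\mid c)=\nabla\log p_t(\cdot\mid E(c)).
\]
Both densities are smooth, strictly positive convolutions with a Gaussian, so the logs differ by a constant, and normalization makes the constant zero. Thus $P(X\mid C=c)*\mathcal N(0,\sigma_t^2s_0^2I)$, rescaled by $\alpha_t$, equals the corresponding smoothing of $P(X\mid Z=E(c))$. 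Here I use that $p(\cdot\mid z)$ is the mixture of $p(\cdot\mid c')$ over $c'$ with $E(c')=z$, and that this mixture is exactly what $m_Z$ computes.

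Finally, Gaussian convolution is injective on probability measures because the Gaussian characteristic function never vanishes, and $\alpha_t>0$ lets me undo the scaling. This yields $P(X\mid C=c)=P(X\mid Z=E(c))$ for almost every $c$.

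The main obstacle is the measure-theoretic bookkeeping in the converse. An a.e.\ equality in $(x_t,t,c)$ must be turned into an equality of conditional laws for almost every $c$. This requires regular conditional probabilities and a Fubini argument to select a single good $t$ and a full-measure set of $c$. It also requires differentiating under the integral to justify Tweedie, which uses finite second moments and Gaussian tails; these are exactly the validity assumptions the theorem defers to Appendix~\ref{app:sufficiency-proof}.
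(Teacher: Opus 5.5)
Your proposal is correct and follows the paper's proof essentially step for step. Fubini selects an interior time with $m_C=m_Z$, and positivity of the Gaussian-smoothed density upgrades this to Lebesgue-a.e.\ equality. The conditional Tweedie identity then gives equal scores and hence equal normalized noisy laws, and the nonvanishing Gaussian characteristic function together with $\alpha_t>0$ deconvolves these to equal clean posteriors. The converse uses the factorization $(X,X_t,t)\indep C\mid Z$, and the mutual-information equivalence uses $I(X;C\mid Z)=\E_C\KL(P(X\mid C)\,\|\,P(X\mid Z))$. Your extra upgrade to everywhere-equality by continuity is harmless but unnecessary, since the score argument only needs almost-everywhere equality.
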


The positive-measure condition on $\cT_{\rm int}$ is important. If time is sampled continuously, merely naming one interior point of zero sampling mass is insufficient. Uniform, log-SNR, Beta, clipped-velocity, and exact-velocity schemes share the same ideal zero set when their effective measure assigns positive mass to interior times and the risk is finite.

\paragraph{Proof idea.} If $\Delta_{\rm rep}^{x}(E)=0$, Fubini's theorem gives at least one interior $t_*$ for which $m_C(y,t_*,c)=m_Z(y,t_*,z)$ almost everywhere. With $Y=\alpha_{t_*}X+\sigma_{t_*}s_0\varepsilon$, Gaussian differentiation gives
\begin{equation}
\nabla_y\log p_{t_*}(y\mid c)=\frac{\alpha_{t_*}m_C(y,t_*,c)-y}{\sigma_{t_*}^2s_0^2},\qquad \nabla_y\log p_{t_*}(y\mid z)=\frac{\alpha_{t_*}m_Z(y,t_*,z)-y}{\sigma_{t_*}^2s_0^2}.
\label{eq:score-identity}
\end{equation}
The two positive smooth densities have the same score and hence are equal after normalization. Their characteristic functions satisfy
\begin{equation}
\varphi_{Y\mid c}(\omega)=\varphi_{X\mid c}(\alpha_{t_*}\omega)\exp\!\left(-\tfrac12\sigma_{t_*}^2s_0^2\norm{\omega}^2\right),
\label{eq:charfun}
\end{equation}
and analogously given $z$. The Gaussian factor is nowhere zero and $\alpha_{t_*}>0$, so equality of the noisy laws implies equality of $P(X\mid C=c)$ and $P(X\mid Z=z)$. The reverse implication follows because posterior sufficiency makes the two Bayes denoisers identical. Appendix~\ref{app:sufficiency-proof} gives the measure-level details.

\begin{corollary}[Zero-set equivalence with conditional KL]
\label{cor:zero-set}
Under Proposition~\ref{prop:kl-decomp} and Theorem~\ref{thm:main-sufficiency},
\begin{equation}
\left\{E:\inf_Q\cK(E,Q)=0\right\}=\left\{E:\Delta_{\rm rep}^{x}(E)=0\right\}.
\label{eq:zero-set}
\end{equation}
This does not imply $\cR_w=\cK$, equality of their numerical values, or monotone KL decrease during parameter optimization.
\end{corollary}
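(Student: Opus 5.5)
The corollary follows by chaining Proposition~\ref{prop:kl-decomp} and Theorem~\ref{thm:main-sufficiency} through the common middle condition of posterior sufficiency. No new analysis is needed; the only care is to make sure both earlier results are applied under a single set of hypotheses and the same notion of ``almost surely''.

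\emph{Left-hand set.} Fix an encoder $E$ satisfying the standing assumptions of both results. Equation~\eqref{eq:kl-profile} gives $\inf_Q\cK(E,Q)=I(X;C\mid Z)$ for an unrestricted conditional generator family. Since $Z=E(C)$ is a function of $C$, the conditional mutual information vanishes exactly when $X\indep C\mid Z$, that is, when $P(X\mid Z)=P(X\mid C)$ almost surely (Definition~\ref{def:sufficient}). For the direction from sufficiency to a zero infimum, I will exhibit the minimizer explicitly: take $Q(\cdot\mid z)=P(X\mid Z=z)$, which makes the second term of \eqref{eq:kl-decomp} vanish. So the left-hand set equals the set of posterior-sufficient encoders.

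\emph{Right-hand set.} The first equivalence in \eqref{eq:main-sufficiency} states directly that $\Delta_{\rm rep}^{x}(E)=0$ if and only if $E$ is posterior sufficient. Hence the right-hand set is also the set of posterior-sufficient encoders, and the two sets in \eqref{eq:zero-set} coincide.

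The main obstacle is hypothesis bookkeeping, not mathematics.
\begin{itemize}
\item Proposition~\ref{prop:kl-decomp} requires the KL divergences and $I(X;C\mid Z)$ to be well defined. Theorem~\ref{thm:main-sufficiency} requires the interior-time mass $\rho(\cT_{\rm int})>0$, finite conditional second moments, and existence of regular conditional probabilities. I will state that the corollary quantifies over encoders for which both sets of assumptions hold.
\item For encoders where $I(X;C\mid Z)$ is undefined, I will fall back on the almost-sure equality $P(X\mid Z)=P(X\mid C)$ as the defining condition. This makes the identity $I(X;C\mid Z)=0\Leftrightarrow$ sufficiency either a standard fact or true by definition.
\item I will check that ``almost surely'' refers to the marginal law of $C$ in both results, so that the two null sets agree.
\end{itemize}

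Finally, I will justify the disclaimer: the corollary equates zero sets only. To show this concretely, I will note from \eqref{eq:xpred-decomp} that $\cR_w$ contains the strictly positive term $\cR_{\rm path}$, while $\cK$ can be zero, so the two objectives cannot coincide numerically.
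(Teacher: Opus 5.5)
Your proposal is correct and follows the same route as the paper, which treats the corollary as immediate: Eq.~\eqref{eq:kl-profile} and the first equivalence in Theorem~\ref{thm:main-sufficiency} each identify their zero set with the posterior-sufficient encoders. One correction for the disclaimer: $\cR_{\rm path}$ is not strictly positive in general (it vanishes when $X$ is almost surely a function of $C$, since then $m_C=X$), so illustrate the non-equality with a non-degenerate posterior, where at a sufficient encoder the profiled clean-prediction risk equals $\cR_{\rm path}>0$ while $\inf_Q\cK(E,Q)=0$.
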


\begin{corollary}[Collapse and posterior equivalence]
\label{cor:collapse}
If $I(X;C)>0$, a constant encoder cannot satisfy Eq.~\eqref{eq:main-sufficiency}. More generally, if two observations $c_1$ and $c_2$ induce different posteriors, a sufficient deterministic encoder cannot merge them except on null sets. The coarsest equivalence permitted by the theory is
\begin{equation}
c_1\sim c_2\quad\Longleftrightarrow\quad P(X\mid C=c_1)=P(X\mid C=c_2).
\label{eq:posterior-equivalence}
\end{equation}
\end{corollary}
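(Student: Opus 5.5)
The statement to prove is Corollary~\ref{cor:collapse}, and I would reduce both claims to Definition~\ref{def:sufficient} together with Theorem~\ref{thm:main-sufficiency}. Theorem~\ref{thm:main-sufficiency} says that $\Delta_{\rm rep}^{x}(E)=0$ holds exactly when $P(X\mid Z)=P(X\mid C)$ almost surely. So it suffices to show that this posterior identity fails for the encoders in question.

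\textbf{Constant encoder.} Let $E\equiv z_0$. Then $Z$ is almost surely constant, so $P(X\mid Z)=P(X)$ almost surely. Sufficiency would therefore require $P(X\mid C)=P(X)$ for $P_C$-almost every $c$, which means $X\indep C$. That gives $I(X;C)=0$, contradicting the hypothesis. Equivalently, with constant $Z$ we have $I(X;C\mid Z)=I(X;C)>0$, so the right-hand condition of Eq.~\eqref{eq:main-sufficiency} fails.

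\textbf{No merging of inequivalent observations.} I would use the tower property for the deterministic map $Z=E(C)$:
\begin{equation*}
P(X\in B\mid Z)=\E\!\left[P(X\in B\mid C)\mid Z\right]\quad\text{a.s.}
\end{equation*}
Suppose $E$ is sufficient. Then there is a kernel $\kappa(z,\cdot)$ such that $P(X\in\cdot\mid C=c)=\kappa(E(c),\cdot)$ for $P_C$-almost every $c$. Hence, off a null set $N$, $E(c_1)=E(c_2)$ forces the two posteriors to coincide. Contrapositively, whenever $c_1,c_2\notin N$ induce different posteriors, $E(c_1)\neq E(c_2)$. This is the claim that merging is impossible except on null sets.

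\textbf{Coarsest equivalence.} Let $\pi(c)=P(X\mid C=c)$, viewed as a map into the space of probability measures. The previous step says that, off $N$, the partition of $C$ induced by $E$ refines the relation $\sim$ of Eq.~\eqref{eq:posterior-equivalence}. For the converse, I would show that $E=\pi$ is itself sufficient. It is measurable when the posterior space carries the Borel $\sigma$-algebra of weak convergence, given regular conditional probabilities. Since $\pi(C)$ determines $P(X\mid C)$, we get $P(X\mid \pi(C))=\E[\pi(C)\mid\pi(C)]=\pi(C)$. Thus $\sim$ is attained by a sufficient encoder, and nothing coarser is allowed, so it is the coarsest equivalence.

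\textbf{Main obstacle.} The delicate step is measure-theoretic: turning the almost-sure identity $P(X\mid Z)=P(X\mid C)$ into a statement about individual pairs $(c_1,c_2)$. I would handle this as follows. Fix a countable generating class $\{B_k\}$ of Borel sets in $\R^d$. For each $k$, take the exceptional null set where the identity fails for $B_k$, and let $N$ be the union of these countably many null sets. Outside $N$, the kernel representation holds simultaneously for all $B_k$, hence for all Borel $B$ by the $\pi$-$\lambda$ theorem.
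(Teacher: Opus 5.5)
Your argument is correct and is the argument the paper leaves implicit; the paper gives no separate proof and treats the corollary as immediate from Definition~\ref{def:sufficient} and Theorem~\ref{thm:main-sufficiency}. A constant code gives $P(X\mid Z)=P(X)$, so $I(X;C\mid Z)=I(X;C)>0$; merged observations share the kernel $P(X\mid Z=z)$; and the posterior map $c\mapsto P(X\mid C=c)$ is itself sufficient and realizes $\sim$. Your null-set bookkeeping adds genuine rigor, but take the countable class to be a $\pi$-system, such as boxes with rational endpoints, so that the $\pi$--$\lambda$ step applies.
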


Corollary~\ref{cor:collapse} gives an ideal anti-collapse statement, not an optimization guarantee. It also replaces unconditional view invariance with augmentation-adaptive sufficiency: weakly and strongly corrupted views of the same sample may appropriately have different representations because they specify different posterior uncertainty.

\subsection{Endpoint correctness and the joint ideal optimum}

Fix an encoder and define the representation-conditioned probability path
\begin{equation}
p_t(x_t\mid z)=\int p_t(x_t\mid x)p(x\mid z)\,\dd x.
\label{eq:path-z}
\end{equation}
Its marginal field is exactly $u_Z(x_t,t,z)=\E[U_t\mid X_t=x_t,t,Z=z]$. For the affine path, the Bayes clean predictor and field are related by
\begin{equation}
u_Z(x_t,t,z)=\frac{m_Z(x_t,t,z)-(1-\sigma_{\min})x_t}{\sigma_t}.
\label{eq:u-z-xpred}
\end{equation}

\begin{theorem}[Conditional endpoint correctness]
\label{thm:endpoint}
Fix $E$. Suppose the learned field equals $u_Z$ almost everywhere, the conditional ODE $\dot x_t=u_Z(x_t,t,z)$ has an appropriate unique flow, and its initial law is $P_0=\mathcal N(0,s_0^2I)$. Then the ODE marginal at every regular time equals $p_t(\cdot\mid z)$. If $\sigma_{\min}=0$, its endpoint satisfies
\begin{equation}
Q_{\rm FM}(X\mid Z=z)=P(X\mid Z=z).
\label{eq:endpoint-z}
\end{equation}
If in addition $\Delta_{\rm rep}^{x}(E)=0$, then
\begin{equation}
\boxed{Q_{\rm FM}(X\mid E(C))=P(X\mid C).}
\label{eq:joint-optimum}
\end{equation}
\end{theorem}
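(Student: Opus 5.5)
The plan is to first show that $u_Z$ generates the path $p_t(\cdot\mid z)$, and then use uniqueness for the continuity equation to identify the ODE marginals with that path. Then read off the endpoint, and finally invoke Theorem~\ref{thm:main-sufficiency} to pass from $Z$ to $C$.

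\textbf{Step 1: $u_Z$ solves the continuity equation for $p_t(\cdot\mid z)$.} Apply Proposition~\ref{prop:conditional-path} with the condition $C$ replaced by $Z$. This is legitimate because $Z=E(C)$ is just another conditioning variable, and $X_0$ is independent of $(X,Z)$ since it is independent of $(X,C)$. For each fixed endpoint $x$, the bridge $p_t(x_t\mid x)=\mathcal N(x_t;\alpha_t x,\sigma_t^2s_0^2I)$ is transported by its own field $\dot\alpha_t x+\dot\sigma_t(x_t-\alpha_t x)/\sigma_t$, which equals $\E[U_t\mid X_t=x_t,X=x]$. Integrating against $p(x\mid z)$ and exchanging derivative and integral gives
\[
\partial_t p_t(x_t\mid z)+\nabla_{x_t}\cdot\bigl(p_t(x_t\mid z)\,u_Z(x_t,t,z)\bigr)=0 ,
\]
with $u_Z$ the posterior-weighted average of the bridge fields, matching Eq.~\eqref{eq:u-z}. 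At $t=0$ we have $\alpha_0=0$ and $\sigma_0=1$, so $p_0(\cdot\mid z)=\mathcal N(0,s_0^2I)=P_0$ for every $z$.

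\textbf{Step 2: identify the ODE marginals.} The learned field equals $u_Z$ almost everywhere with respect to the law of $(X_t,t,Z)$. This is the step I expect to be the main obstacle, because almost-everywhere equality must be upgraded to equality of flows. For $t$ in the interior of $[0,1)$ we have $\sigma_t>0$, so $p_t(\cdot\mid z)$ is a positive smooth density and a null-set discrepancy does not affect the pushforward. I will therefore take the field to be $u_Z$ itself.

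The pushforward $\mu_t=(\Phi_t)_\#P_0$ of the unique flow $\Phi_t$ is a weak solution of the same continuity equation with the same initial datum. Under the assumed ``appropriate unique flow'' (e.g.\ local Lipschitz bounds on compact time intervals together with integrability $\int_0^{T}\!\int\norm{u_Z}\,\dd p_t\,\dd t<\infty$), the uniqueness theorem for continuity equations, or the superposition principle, yields $\mu_t=p_t(\cdot\mid z)$ for every regular $t<1$. I will state explicitly that this regularity is exactly what the hypothesis supplies.

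\textbf{Step 3: endpoint and the joint optimum.} Pass to the limit $t\to1$. With $\sigma_{\min}=0$, $X_t=tX+\sigma_tX_0\to X$ almost surely, so $p_t(\cdot\mid z)\to P(X\mid Z=z)$ weakly. The flow endpoint is defined as the limit of $\Phi_t$, or equivalently $\mu_t$ converges weakly to the law of the endpoint, so $Q_{\rm FM}(X\mid Z=z)=P(X\mid Z=z)$. The possible singularity of $u_Z$ at $t=1$ is handled by working only at $t<1$ and taking this weak limit.

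Finally, if $\Delta_{\rm rep}^{x}(E)=0$, Theorem~\ref{thm:main-sufficiency} gives $P(X\mid Z)=P(X\mid C)$ almost surely. Substituting $z=E(c)$ yields $Q_{\rm FM}(X\mid E(C))=P(X\mid E(C))=P(X\mid C)$ for $P_C$-almost every $c$.
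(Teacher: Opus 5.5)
Your proposal is correct and follows essentially the same route as the paper's proof. You marginalize the endpoint bridges to show that $(p_t(\cdot\mid z),u_Z)$ satisfies the continuity equation, use uniqueness from the common source $P_0$ to identify the ODE marginals with $p_t(\cdot\mid z)$, read off the endpoint, and invoke Theorem~\ref{thm:main-sufficiency}; your only addition, obtaining $t=1$ as a weak limit of the marginals at $t<1$, handles the endpoint singularity a bit more carefully than the paper's direct evaluation $p_1(\cdot\mid z)=P(X\mid Z=z)$.
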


The two equalities in Theorem~\ref{thm:endpoint} separate the responsibilities of the decoder and encoder. Exact field fitting gives the posterior available through $Z$; zero representation deficiency makes that posterior equal to the one specified by $C$. Generation and representation learning are synchronized because failure of either component appears as a distinct term in the same population risk.

\begin{figure}[t]
\centering
\fbox{\begin{minipage}{0.94\linewidth}\centering\small Clean sample $X$ $\longrightarrow$ stochastic observation $C=A(X,\Xi_A)$ $\longrightarrow$ encoder $E_\theta$ $\longrightarrow$ representation $Z$\\[5pt] Independent source $X_0$ and clean target $X$ $\longrightarrow$ path state $X_t=\alpha_tX+\sigma_tX_0$\\[5pt] $(X_t,t,Z)$ $\longrightarrow$ flow decoder $D_\phi$ $\longrightarrow$ $\widehat X$ $\longrightarrow$ weighted clean-prediction loss $\longrightarrow$ joint update of $(\theta,\phi)$\\[5pt] At inference: $(C,X_0)$ $\longrightarrow$ $(Z,\text{conditional ODE})$ $\longrightarrow$ a draw from $Q_{\rm FM}(X\mid Z)$\end{minipage}}
\caption{Drift Variation autoencoder uses one fixed sample-based objective to update the representation encoder and conditional flow decoder.
The clean target is used only during training; source noise supplies posterior diversity at inference.}
\label{fig:pipeline}
\end{figure}

\subsection{Multimodal posterior Flow Matching}

Let a paired multimodal sample be $\bX=(X^{(1)},\ldots,X^{(M)})\in\R^{d_1}\times\cdots\times\R^{d_M}$. The observation $\bC$ may contain stochastic corruptions of any subset of modalities together with a modality-presence pattern $S$. For example, it can contain visible image patches and audio while hiding depth and text. A multimodal encoder produces $\bZ=E(\bC)$, and the target is the joint posterior $P(\bX\mid\bC)$ rather than independent modality-wise marginals.

Use a full-rank Gaussian source $\beps\sim\mathcal N(0,I)$ independent of $(\bX,\bC)$ and a block path
\begin{equation}
\bX_t=\bA_t\bX+\bS_t\beps,
\label{eq:mm-path}
\end{equation}
where the simplest choice is $\bA_t=\alpha_tI$ and $\bS_t=\sigma_ts_0I$. Modality-specific schedules are allowed when $\bA_t$ is invertible and $\bS_t\bS_t^\top$ is positive definite on an interior set of positive sampling mass.

\begin{corollary}[Joint multimodal sufficiency]
\label{cor:multimodal}
Assume the conditions of Theorem~\ref{thm:main-sufficiency} on the product space, replace the scalar schedules by Eq.~\eqref{eq:mm-path}, and measure clean-prediction error with any positive-definite modality weight matrix. If the effective time measure gives positive mass to points where $\bA_t$ is invertible and $\bS_t\bS_t^\top$ is positive definite, then
\begin{equation}
\Delta_{\rm rep}^{\rm mm}(E)=0\quad\Longleftrightarrow\quad P(\bX\mid\bZ)=P(\bX\mid\bC)\quad\text{almost surely}.
\label{eq:mm-sufficiency}
\end{equation}
With an exact field and zero-noise endpoint, conditional ODE sampling returns the complete joint posterior. Changing which modalities are observed changes $\bC$ but not the objective, yielding reconstruction, cross-modal completion, any-subset generation, and unconditional generation as instances of one model.
\end{corollary}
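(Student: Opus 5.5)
The plan is to rerun the proof of Theorem~\ref{thm:main-sufficiency} on the product space $\R^{D}$ with $D=d_1+\cdots+d_M$, replacing scalar Gaussian algebra by matrix algebra. First define $\Delta_{\rm rep}^{\rm mm}(E)=\E[\norm{\bm_C-\bm_Z}_{W(t)}^2]$, where $\bm_C=\E[\bX\mid\bX_t,t,\bC]$, $\bm_Z=\E[\bX\mid\bX_t,t,\bZ]$, and $W(t)$ is the positive-definite modality weight (possibly times a scalar $w(t)$). The orthogonal decomposition of Theorem~\ref{thm:xpred-decomp} goes through verbatim in the weighted inner product $\langle a,b\rangle_{W}=a^\top W b$. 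This is because $\bm_Z$ is the $L^2$ projection of $\bm_C$ onto the $(\bX_t,t,\bZ)$-measurable functions, and the projection is unaffected by a fixed positive-definite $W(t)$ that depends only on $t$. Hence $\Delta_{\rm rep}^{\rm mm}$ is the encoder-dependent term.

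\emph{Reverse direction.} If $P(\bX\mid\bZ)=P(\bX\mid\bC)$, then $(\bX,\bX_t)$ has the same conditional law given $\bC$ as given $\bZ$, because $\beps$ is independent of $(\bX,\bC)$. The two Bayes denoisers therefore coincide a.s., and $\Delta_{\rm rep}^{\rm mm}=0$.

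\emph{Forward direction.} Let $\cT^{\rm mm}_{\rm int}$ be the set of times where $\bA_t$ is invertible, $\Sigma_t:=\bS_t\bS_t^\top\succ0$, and the weight is positive and finite. By Fubini and positive definiteness of $W$, vanishing of $\Delta_{\rm rep}^{\rm mm}$ gives some $t_*$ in this set (of positive mass) with $\bm_C(y,t_*,\bc)=\bm_Z(y,t_*,\bz)$ for a.e.\ $y$ and a.e.\ $(\bc,\bz=E(\bc))$. The noisy conditional density $p(y\mid\bc)=\int\mathcal N(y;\bA x,\Sigma)\,P(\dd x\mid\bc)$ is smooth and strictly positive. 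Differentiating under the integral gives the multivariate Tweedie identity
\[
\nabla_y\log p_{t_*}(y\mid\bc)=\Sigma_{t_*}^{-1}\bigl(\bA_{t_*}\bm_C(y,t_*,\bc)-y\bigr),
\]
and the same identity given $\bz$. Equal denoisers therefore give equal scores. Both densities are positive, smooth, and normalized, so they are equal on the connected space $\R^D$. Passing to characteristic functions,
\[
\varphi_{Y\mid\bc}(\omega)=\varphi_{\bX\mid\bc}(\bA_{t_*}^\top\omega)\exp\!\bigl(-\tfrac12\omega^\top\Sigma_{t_*}\omega\bigr).
\]
The Gaussian factor never vanishes and $\bA_{t_*}^\top$ is a bijection, so $\varphi_{\bX\mid\bc}=\varphi_{\bX\mid\bz}$ everywhere, i.e.\ $P(\bX\mid\bC)=P(\bX\mid\bZ)$ a.s. The presence pattern $S$ is part of $\bC$ and needs no separate treatment. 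The conditioning is on the full tuple $\bX$ for every mask, so the identified object is the joint posterior, not the modality marginals.

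\emph{Endpoint claim.} With an exact field and zero-noise endpoint ($\bS_1=0$, $\bA_1=I$), the continuity-equation argument of Proposition~\ref{prop:conditional-path} and Theorem~\ref{thm:endpoint} applies unchanged on $\R^D$, with $u_{\bZ}=\E[\dot\bA_t\bX+\dot\bS_t\beps\mid\bX_t,t,\bZ]$. The endpoint is therefore $P(\bX\mid\bZ)=P(\bX\mid\bC)$. Reconstruction, completion, any-subset, and unconditional generation are the special cases of $\bC$ with all, some, or no modalities observed.

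\emph{Main obstacle.} I expect the delicate step to be the null-set bookkeeping: the a.e.\ equality at $t_*$ must hold jointly in $y$ and in $\bc$ on a set of full $P_{\bC}$-measure, and a single $t_*$ must be chosen for almost all $\bc$. I would handle this as in Appendix~\ref{app:sufficiency-proof}. Apply Fubini to the product measure $\rho\otimes P_{\bC}\otimes\mathrm{Leb}$, using that the noisy density is positive everywhere, so Lebesgue-a.e.\ and $P_{\bX_t\mid\bc}$-a.e.\ coincide. Then use continuity of both denoisers in $y$ to upgrade a.e.\ equality to everywhere equality before integrating the score.
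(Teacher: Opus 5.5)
Your proposal is correct and follows the paper's argument in Appendix~\ref{app:endpoint-multimodal} essentially step for step. Positive-definiteness of the weight turns a vanishing representation term into equality of the joint denoisers at an interior time. The matrix identity $\nabla_y\log p(y\mid\bc)=\Sigma^{-1}(\bA\,\E[\bX\mid\bY=y,\bC=\bc]-y)$ then gives equal noisy scores and densities, and the nonvanishing Gaussian factor with surjectivity of $\bA^\top$ recovers the full joint characteristic function; the converse uses the same conditional-independence factorization.

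Your extra care with null sets (one $t_*$ via Fubini, continuity of the denoisers in $y$) sharpens the paper's sketch. The one caveat the paper states that you omit is that the empty-mask branch gives the unconditional law only when the mask is drawn independently of the data.
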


The proof is the proof of Theorem~\ref{thm:main-sufficiency} in the concatenated space. The noisy characteristic function is multiplied by the nonzero factor $\exp(-\omega^\top\bS_t\bS_t^\top\omega/2)$, while invertibility of $\bA_t$ recovers every frequency of the joint clean law. Crucially, this identifies cross-modal dependencies and not only each marginal distribution.

For an any-subset implementation, the observation mask and Flow target must therefore have different roles. If $S$ is observed, only the modalities in $S$ enter the encoder condition, but the clean target remains the complete tuple $\bX$ for every $S$. All modality streams receive independent source noise, enter the joint noisy state $\bX_t$, contribute through a positive-definite loss, and evolve during ODE sampling. Visible modalities are generated and reconstructed; missing modalities are completed. The decoder receives the standard noisy Flow state and $\bZ$, never a separate copy of the raw clean observation. Training only the complement $\bX_{S^c}$ is a valid completion-only ablation, but it identifies $P(\bX_{S^c}\mid\bC)$ rather than the full joint posterior asserted by Corollary~\ref{cor:multimodal}.

The corollary is modality-agnostic at the probability level but not representation-free at the implementation level. Images, audio, video, depth, actions, and continuous text embeddings may use modality-specific tokenizers and output heads with a shared fusion backbone. Raw discrete text requires dequantization, a continuous latent codec, or a discrete Flow Matching construction with a separate identifiability proof; the Gaussian theorem should not be quoted unchanged for categorical tokens.

\subsection{Training and sampling}

\begin{table}[t]
\caption{Drift Variation autoencoder training with an affine Gaussian path.
Every prose step denotes a minibatch operation.}
\label{alg:training}
\centering
\small
\begin{tabular}{@{}rp{0.84\linewidth}@{}}
\toprule
\multicolumn{2}{@{}l}{\textbf{Input:} encoder $E_\theta$, clean predictor $D_\phi$, augmentation law $A$, time law $\rho$, weight $w$}\\
\midrule
1 & Sample a clean example $X$ and augmentation randomness, then form $C=A(X,\Xi_A)$.\\
2 & Encode only the observation: $Z=E_\theta(C)$.\\
3 & Sample $X_0\sim\mathcal N(0,s_0^2I)$ independently and $t\sim\rho$.\\
4 & Form $\sigma_t=1-(1-\sigma_{\min})t$ and $X_t=tX+\sigma_tX_0$.\\
5 & Predict $\widehat X=D_\phi(X_t,t,Z)$ and compute $w(t)\norm{\widehat X-X}^2$.\\
6 & Backpropagate jointly through $D_\phi$ and $E_\theta$; no stopped target or external teacher is required by the theory.\\
\midrule
\multicolumn{2}{@{}l}{\textbf{Output:} jointly trained representation encoder and posterior flow decoder}\\
\bottomrule
\end{tabular}
\end{table}

At sampling time, encode the available observation once, draw $X_0\sim\mathcal N(0,s_0^2I)$, and integrate
\begin{equation}
\frac{\dd X_t}{\dd t}=\frac{D_\phi(X_t,t,Z)-(1-\sigma_{\min})X_t}{\sigma_t}
\label{eq:sampling-ode}
\end{equation}
from $t=0$ to $t=1$. Different source draws produce different posterior samples. For $\sigma_{\min}=0$, network evaluations remain at $t<1$ because Eq.~\eqref{eq:sampling-ode} has a parameterization singularity at the endpoint; a numerical solver may integrate the final step to $1$ without querying the field exactly there.

\section{Controlled Validation}
\label{sec:evaluation}

The theory is a population statement. The evaluation tests its distinctive finite-model predictions on a distribution whose conditional factors and exact sampling oracle are known. The goal is not to simulate the semantic complexity of real image, text, and audio, but to distinguish representation use, marginal completion, and coherent joint posterior sampling without relying on a learned evaluator.

\subsection{CrossGeom-4 benchmark}
\label{sec:crossgeom}

CrossGeom-4 contains four independent continuous factors $u_0,u_{AB},u_{AC},u_{BC}$, each sampled from an equiprobable two-component Gaussian mixture with means $\pm1.5$ and standard deviation $0.25$. Three modalities expose overlapping factor subsets:
\begin{equation}
A:(u_0,u_{AB},u_{AC}),\quad
B:(u_0,u_{AB},u_{BC}),\quad
C:(u_0,u_{AC},u_{BC}).
\label{eq:crossgeom-factors}
\end{equation}
Observing one modality leaves exactly one factor absent from the condition but shared by the other two modalities. Correct joint generation must sample that factor from its conditional law and use the \emph{same draw} in both outputs. Observing two modalities determines every factor of the third.

The benchmark uses three coordinate levels. L1 applies fixed orthogonal transforms to Euclidean views; L2 adds an invertible elementwise nonlinearity; and L3 keeps $A$ in $\R^3$, embeds $B$ on $\mathbb S^3$ through inverse stereographic coordinates, and embeds $C$ on the upper hyperboloid $\mathbb H^3$. L3 is an extrinsic baseline: Flow Matching operates in ambient Euclidean coordinates and projects only the endpoint. No claim is made of intrinsic Riemannian transport.

\paragraph{Model and full-tuple objective.} A three-layer shared masked Transformer encoder with width 96 maps the visible modalities to one global token. A four-layer decoder receives time, the encoded condition, and noisy streams for all three modalities. The observation mask controls only encoder visibility. For every one of the eight masks, A, B, and C all receive independent Gaussian source noise, all contribute a dimension-normalized clean-prediction loss, and all are integrated during ODE sampling. Thus an A-only query generates $(A,B,C)$, including reconstruction of the visible A stream. The decoder receives no separate clean-condition bypass.

The joint decoder applies self-attention across all target tokens. The matched independent baseline uses one decoder per modality and therefore cannot couple residual source randomness across outputs through target-target attention. It retains the same encoder, hidden width, depth, full-tuple loss, and visibility schedule.

\paragraph{Protocol and metrics.} The protocol balances all eight visibility patterns and trains for 5,000 updates with batch size 256, learning rate $3\times10^{-4}$, clipped $\sigma_t^{-2}$ weighting with cap 100, and $\sigma_{\min}=0$. Evaluation uses a 32-step Euler solver, 128 held-out conditions, 16 samples per condition, and three seeds for every level/decoder pair, for 18 runs in total.

The evaluation linearly probes every factor from every nonempty visible subset. Factor-space MAE measures reconstruction of visible streams and prediction of factors known from the condition. With one visible modality, target-target MAE measures whether the two other outputs agree on their shared unknown factor. The evaluation compares the latter factor's marginal with the exact conditional oracle using one-dimensional Wasserstein distance (W1) and records whether repeated samples cover both mixture signs. A shuffled-condition test keeps the ODE source noise fixed. Unconditional evaluation reports all 16 sign modes, TV distance to the uniform mode law, and cross-modality factor disagreement. Appendix \ref{app:crossgeom} gives complete mean $\pm$ standard-deviation tables.

\subsection{Results}

\paragraph{The encoder is used and exposes observed factors.} Across all levels and both decoder types, mean observed-factor probe $R^2$ is $0.9990$-$0.9992$. For an A-only condition, the unavailable $u_{BC}$ factor has $R^2$ between $-0.023$ and $-0.016$, as expected under no leakage. With the same initial ODE noise, shuffling the joint model's valid conditions increases conditional error by $13.5\times$-$15.7\times$. The decoder therefore uses condition-specific information carried by the encoder rather than solving the task from the noisy Flow state alone.

\paragraph{All observed subsets generate the complete tuple.} The joint model's visible-stream factor MAE, averaged over all seven nonempty observation masks, is $0.0912$-$0.1041$. Its condition-to-target MAE is $0.1055$-$0.1583$ for single-input directions and $0.0923$-$0.1196$ for double-input directions. The all-visible ABC query also reconstructs all three generated streams, with MAE $0.0817$, $0.0818$, and $0.0944$ on L1-L3. These measurements directly distinguish the implementation from a complement-only objective in which visible target streams would be inactive.

\begin{table*}[t]
\centering
\small
\caption{Joint coupling under full-tuple Flow Matching.
Conditional MAE is similar for matched independent (Ind.) and joint decoders.
Joint target attention specifically reduces disagreement and distribution error for the unknown factor shared by two outputs.
Values are three-seed means; complete standard deviations appear in Appendix~\ref{app:crossgeom}.}
\label{tab:crossgeom-coupling}
\resizebox{\textwidth}{!}{%
\begin{tabular}{lccccccccc}
\toprule
& \multicolumn{2}{c}{Single-input conditional MAE} &
\multicolumn{3}{c}{Unknown target-target MAE} &
\multicolumn{3}{c}{Unknown-factor W1} & Uncond. joint \\
\cmidrule(lr){2-3}\cmidrule(lr){4-6}\cmidrule(lr){7-9}
Level & Ind. & Joint & Ind. & Joint & Reduction & Ind. & Joint & Reduction & reduction \\
\midrule
L1 & .1166 & .1176 & 1.6397 & .1186 & 92.8\% & .6776 & .2067 & 69.5\% & 88.8\% \\
L2 & .1145 & .1154 & 1.6278 & .1201 & 92.6\% & .6878 & .2669 & 61.2\% & 89.4\% \\
L3 & .1310 & .1338 & 1.5651 & .1542 & 90.1\% & .7143 & .1648 & 76.9\% & 86.6\% \\
\bottomrule
\end{tabular}}
\end{table*}

\paragraph{Marginal completion is not joint posterior sampling.} Table~\ref{tab:crossgeom-coupling} isolates the central claim. Independent and joint decoders have nearly identical known-factor accuracy, and both recover both signs of the residual mixture for essentially every context. Nevertheless, independent outputs assign incompatible values to the shared unknown factor. Joint attention reduces that disagreement by $90.1$-$92.8\%$ and reduces its W1 error by $61.2$-$76.9\%$. The effect therefore concerns posterior coupling, not an easier deterministic prediction or better mode coverage in either marginal alone.

\begin{figure*}[t]
\centering
\includegraphics[width=0.98\textwidth]{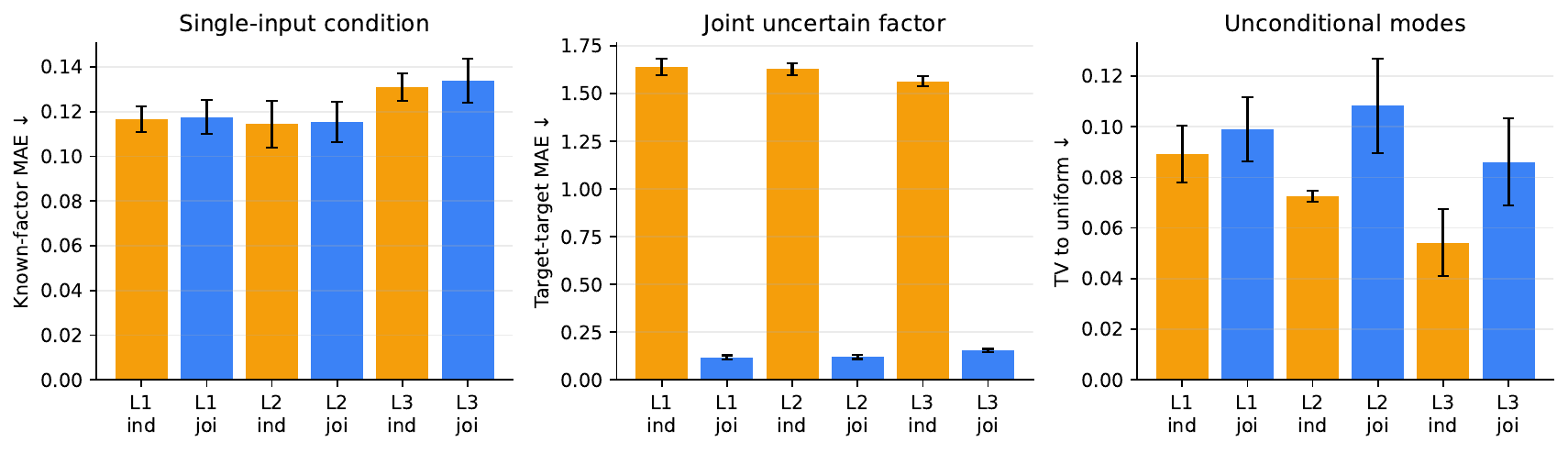}
\caption{Known-factor conditional MAE, disagreement on the jointly sampled unknown factor, and unconditional TV-to-uniform.
Joint and independent decoders have similar conditional accuracy, but only joint target attention coordinates residual uncertainty.
Error bars are standard deviations across the three seeds, computed with denominator $n$.}
\label{fig:crossgeom}
\end{figure*}

\paragraph{Unconditional generation remains the limiting case.} Both decoder types reach all 16 sign modes. Joint decoding reduces unconditional cross-modality factor disagreement by $86.6$-$89.4\%$, from $1.598$-$1.667$ to $0.177$-$0.213$. Its TV-to-uniform remains $0.086$-$0.108$, however, compared with a finite-sample reference of about $0.034$. The independent baseline has lower mode-frequency TV but fails joint factor consistency. Hence the strongest evidence is conditional: the same checkpoint exposes observed factors, uses the encoder condition, reconstructs visible targets, and couples residual uncertainty. L3 manifold-constraint residuals are below $3\times10^{-7}$ after endpoint projection, verifying valid outputs but not intrinsic manifold Flow Matching.

\section{Limitations}

The main equivalence is a population and realizability result. It does not show that stochastic gradient descent reaches a global optimum, that a finite encoder-decoder pair realizes the Bayes denoiser, or that a small empirical loss implies a quantitative bound on $I(X;C\mid Z)$. In finite capacity, representation deficiency and decoder approximation interact, and the easiest representation for a decoder need not be sufficient.

Posterior sufficiency is not minimal sufficiency or human semantic quality. A sufficient code may preserve instance identity, texture, or any detail that changes $P(X\mid C)$. Observation design and bottleneck structure determine which information is emphasized. CrossGeom-4 verifies factor accessibility and joint sampling in a deliberately low-dimensional system; it is not evidence of transfer to natural images, text, or audio. A real multimodal claim requires modality-specific tokenizers, substantially larger models, semantic and dense probes, and standard generative metrics.

The proof uses additive full-rank Gaussian corruption at a positively weighted interior time and injectivity of Gaussian convolution. Singular paths, non-Gaussian sources, discrete tokens, and intrinsic manifold-valued flows need separate identifiability arguments. CrossGeom-4 L3 uses ambient Euclidean Flow Matching followed by endpoint projection, not Riemannian Flow Matching. Raw categorical text likewise requires dequantization, a continuous codec, or a discrete-flow analysis.

Endpoint correctness assumes an exact field, a regular conditional ODE, and controlled integration. Finite-step solvers, the clean-prediction parameterization singularity at $t=1$, codecs, clipping, and projection add errors outside the theorem. Existing finite-error KL results require further smoothness and field regularity~\cite{su2025flow}; this work does not provide a general finite-risk conditional-KL or mutual-information bound.

The method also requires a condition-independent source and a genuine encoder bottleneck. If the decoder receives raw observations, crop metadata, a condition-dependent initialization, or another informative side channel, the sufficiency claim applies to the combined condition. In the full-tuple implementation, the decoder receives noisy Flow states for every modality but no separate clean visible stream; the visibility mask affects only the encoder. Finally, finite data can under-sample high-entropy modes. This appears in the results: all 16 unconditional modes are reached, but their frequency TV remains above the finite-sample reference even when joint consistency improves by nearly an order of magnitude.

\section{Conclusion}

Drift Variation autoencoder starts from one statistical object: the posterior of clean data given a stochastic observation. Conditional KL makes the desired decomposition explicit, while Conditional Flow Matching makes it sample based. Under an encoder bottleneck, the clean-prediction risk separates path variance, representation deficiency, and model approximation. For an affine Gaussian path, its representation term vanishes exactly when $P(X\mid Z)=P(X\mid C)$, and an exact conditional field transports common noise to that posterior. Generation and representation learning are synchronized in this precise zero-set and endpoint sense, not because Flow Matching numerically equals KL.

The product-space extension requires the complete multimodal tuple to remain the Flow target under every observation mask. CrossGeom-4 supports the resulting finite-model mechanism: the encoder linearly exposes observable factors, condition shuffling causes an order-of-magnitude degradation, visible modalities are reconstructed as active targets, and joint attention reduces unobserved shared-factor disagreement by $90$-$93\%$ relative to independent decoders without improving the easier deterministic conditional metric. Unconditional mode imbalance and the synthetic scale keep the claim narrow. The next empirical step is therefore not a stronger theorem but a matched natural-data evaluation of whether the same posterior-sufficient bottleneck also yields semantic and dense transfer.

\section*{AI Use Statement}

AI tools were used to assist with language polishing and the narrative construction of this manuscript. The author independently developed the research ideas, theoretical arguments, experiments, analyses, and conclusions, reviewed all AI-assisted edits, and takes full responsibility for the final content.

\bibliographystyle{unsrtnat}
{\small\bibliography{references}}

\clearpage
\appendix
\section{Notation, Assumptions, and Scope}
\label{app:assumptions}

All random variables are defined on a common probability space. Uppercase letters denote random variables and lowercase letters denote realizations. Equalities between conditional laws and conditional expectations are understood up to the corresponding null sets. Because $Z=E(C)$ is deterministic, $\sigma(Z)\subseteq\sigma(C)$ and $X\to C\to Z$ is a Markov chain. Regular conditional probabilities are used so that expressions such as $P(X\in B\mid C=c)$ and $P(X\in B\mid Z=z)$ can be chosen as measurable kernels.

The main text uses the following operation-level assumptions rather than a maximally sharp analytic package.
\begin{enumerate}[leftmargin=2em,itemsep=3pt]
\item \textbf{Conditional laws.} The data and representation spaces are standard Borel spaces, so regular conditional laws exist; in Theorem~\ref{thm:main-sufficiency}, the target $X$ takes values in $\R^d$ and has the conditional moments needed by the displayed squared losses.
\item \textbf{Independent Gaussian source.} $X_0=s_0\varepsilon$ with $s_0>0$, $\varepsilon\sim\mathcal N(0,I_d)$, and $\varepsilon\indep(X,C)$. This guarantees a common source for all conditions and a strictly positive smooth interior density.
\item \textbf{Schedules.} $\alpha_t$ and $\sigma_t$ are differentiable on the time interval used to define the velocity. For exact posterior endpoints, $\alpha_0=0$, $\sigma_0=1$, $\alpha_1=1$, and $\sigma_1=0$.
\item \textbf{Interior support.} The time variable $t\sim\rho$ is sampled independently of $(X,C,X_0)$. The set on which $\alpha_t>0$, $\sigma_t>0$, and the effective loss weight is positive and finite has strictly positive $\rho$ measure. This condition is stronger and more appropriate for a continuous time law than requiring one named point that may have zero mass.
\item \textbf{Interchange and score regularity.} Gaussian convolution may be differentiated under the conditional integral, Fubini and tower-property manipulations are valid, and equality of the resulting smooth scores identifies normalized densities. Standard finite-moment and domination assumptions are sufficient for the Gaussian formulas used here.
\item \textbf{Flow regularity.} Whenever equality of a vector field is used to infer equality of ODE marginals, the continuity equation and conditional ODE possess the required existence and uniqueness, and boundary or integrability conditions justify the manipulations.
\item \textbf{Realizability when stated.} Claims involving exact Bayes predictors or exact endpoint distributions assume the decoder family can realize the corresponding conditional expectation or velocity. The algebraic risk decompositions themselves do not require realizability.
\end{enumerate}

The conditional KL in Eq.~\eqref{eq:ideal-kl} may be interpreted with densities relative to a common dominating measure or directly as a KL between probability kernels. Proposition~\ref{prop:kl-decomp} applies whenever the chain rule is valid, with extended-real values handled in the usual way. The equivalence with conditional mutual information uses
\begin{equation}
I(X;C\mid Z)=\E_C\KL\!\left(P(X\mid C)\,\|\,P(X\mid Z)\right),
\label{eq:cmi-kernel}
\end{equation}
which follows because $P(X\mid C,Z)=P(X\mid C)$ for deterministic $Z=E(C)$.

The sample CFM target $U_t$ is fixed by $(X,X_0,t)$ and does not depend on the current encoder. The representation-conditioned Bayes field $u_Z$ does depend on the encoder through the sigma-field generated by $Z$. This distinction permits exact projection decompositions while preventing an interpretation of encoder training as direct KL gradient descent.

If the decoder receives an additional deterministic side channel $B=B(C)$, every theorem remains valid after replacing $Z$ by the combined representation $R=(Z,B)$. A claim about $Z$ alone then requires the stronger condition that $B$ does not carry additional posterior information or that the bypass is removed.

\section{Proof of the Conditional KL Decomposition}
\label{app:conditional-kl}

\begin{proof}[Proof of Proposition~\ref{prop:kl-decomp}]
Because $Z=E(C)$, insert the true representation-conditioned posterior between the numerator and denominator of Eq.~\eqref{eq:ideal-kl}:
\begin{align}
\log\frac{\dd P(X\mid C)}{\dd Q(X\mid Z)}
&=\log\frac{\dd P(X\mid C)}{\dd P(X\mid Z)}+\log\frac{\dd P(X\mid Z)}{\dd Q(X\mid Z)}.
\label{eq:rn-split}
\end{align}
Taking expectation under the joint law of $(X,C,Z)$ gives
\begin{align}
\cK(E,Q)
&=\E_C\KL\!\left(P(X\mid C)\,\|\,P(X\mid Z)\right)+\E_{X,Z}\log\frac{\dd P(X\mid Z)}{\dd Q(X\mid Z)}\nonumber\\
&=I(X;C\mid Z)+\E_Z\KL\!\left(P(X\mid Z)\,\|\,Q(X\mid Z)\right).
\end{align}
The first equality uses Eq.~\eqref{eq:cmi-kernel}; the second term is obtained by conditioning on $Z$. Nonnegativity shows that an unrestricted generator is minimized by $Q(\cdot\mid Z)=P(\cdot\mid Z)$, giving Eq.~\eqref{eq:kl-profile}. Conditional mutual information is zero exactly when $X\indep C\mid Z$, which is equivalent to Eq.~\eqref{eq:sufficiency} because $Z$ is a function of $C$.
\end{proof}

Proposition~\ref{prop:kl-decomp} also clarifies why minimality is absent. If $E$ is sufficient, augmenting $Z$ with any deterministic statistic of $C$ keeps the first term zero and does not worsen the unrestricted generator optimum. Minimality would require a separate ordering, dimension, entropy, rate, or bottleneck criterion.

\section{Conditional Path and Risk Decompositions}
\label{app:risk-decompositions}

\subsection{Marginalizing endpoint bridges}

For a fixed endpoint $X=x$, let $p_t(\cdot\mid x)$ satisfy the continuity equation with a bridge velocity $u_t(\cdot\mid x)$:
\begin{equation}
\partial_t p_t(x_t\mid x)+\nabla_{x_t}\cdot\left(p_t(x_t\mid x)u_t(x_t\mid x)\right)=0.
\label{eq:bridge-continuity-app}
\end{equation}
Conditioning on $C=c$ and integrating over $P(\dd x\mid c)$ yields
\begin{align}
\partial_t p_t(x_t\mid c)
&=-\nabla_{x_t}\cdot\int p_t(x_t\mid x)u_t(x_t\mid x)p(\dd x\mid c)\nonumber\\
&=-\nabla_{x_t}\cdot\left(p_t(x_t\mid c)\int u_t(x_t\mid x)\frac{p_t(x_t\mid x)p(\dd x\mid c)}{p_t(x_t\mid c)}\right).
\end{align}
The integral in parentheses is $\E[U_t\mid X_t=x_t,t,C=c]=u_C(x_t,t,c)$, proving Eq.~\eqref{eq:continuity-c}. For any square-integrable $v=v(X_t,t,C)$,
\begin{align}
\E\norm{v-U_t}^2
&=\E\norm{v-u_C}^2+\E\norm{u_C-U_t}^2+2\E\langle v-u_C,u_C-U_t\rangle.
\end{align}
The cross term is zero because $\E[U_t-u_C\mid X_t,t,C]=0$, proving Eq.~\eqref{eq:cfm-fm}.

\subsection{Nested projection for the velocity field}

Because $Z=E(C)$, the sigma-fields are nested:
\begin{equation}
\sigma(X_t,t,Z)\subseteq\sigma(X_t,t,C).
\end{equation}
The tower property gives $u_Z=\E[u_C\mid X_t,t,Z]$. Decompose
\begin{equation}
U_t-v=(U_t-u_C)+(u_C-u_Z)+(u_Z-v).
\end{equation}
The first difference is orthogonal in $L_2$ to every function of $(X_t,t,C)$. The second is orthogonal to every function of $(X_t,t,Z)$. Consequently all three pairwise cross terms vanish, which proves Theorem~\ref{thm:velocity-decomp}.

The representation term can also be written as an expected conditional variance:
\begin{equation}
\Delta_{\rm rep}^{v}(E)=\E\operatorname{Var}\!\left(u_C(X_t,t,C)\mid X_t,t,Z\right),
\label{eq:velocity-var}
\end{equation}
where the variance denotes the trace of the conditional covariance. This form emphasizes that the loss is incurred when distinct condition-level fields remain possible after conditioning on $Z$.

\subsection{Nested projection for clean prediction}

For a time-only weight $w(t)\ge0$, define the weighted inner product $\langle f,g\rangle_w=\E[w(t)f^\top g]$. Conditional expectation remains an orthogonal projection on all times where the weight is positive because the weight is measurable with respect to $t$. The tower property gives
\begin{equation}
m_Z=\E[m_C\mid X_t,t,Z].
\end{equation}
Expanding
\begin{equation}
X-D=(X-m_C)+(m_C-m_Z)+(m_Z-D)
\end{equation}
under the weighted inner product eliminates the cross terms and proves Theorem~\ref{thm:xpred-decomp}. In particular,
\begin{equation}
\Delta_{\rm rep}^{x}(E)=\E\!\left[w(t)\operatorname{Var}\!\left(m_C(X_t,t,C)\mid X_t,t,Z\right)\right].
\label{eq:xpred-var}
\end{equation}

\subsection{Clean prediction and affine velocity}

For $a=1-\sigma_{\min}$, the affine path satisfies $X_t=\sigma_tX_0+tX$ and $U_t=X-aX_0$. Eliminating $X_0=(X_t-tX)/\sigma_t$ gives
\begin{equation}
U_t=\frac{X-aX_t}{\sigma_t}.
\end{equation}
If $D$ induces $v=(D-aX_t)/\sigma_t$, then
\begin{equation}
\norm{v-U_t}^2=\frac{1}{\sigma_t^2}\norm{D-X}^2.
\label{eq:velocity-xpred-equivalence}
\end{equation}
Taking the conditional expectation of $U_t$ given $(X_t,t,Z)$ also yields Eq.~\eqref{eq:u-z-xpred}. Thus an exact clean-prediction Bayes rule produces the exact representation-conditioned marginal velocity wherever $\sigma_t>0$.

\section{Proof of Gaussian Flow Representation Sufficiency}
\label{app:sufficiency-proof}

Theorem~\ref{thm:main-sufficiency} is proved through three lemmas. Fix an interior time and write
\begin{equation}
Y=\alpha X+\beta\varepsilon,\qquad \alpha>0,\qquad \beta=\sigma s_0>0,\qquad \varepsilon\sim\mathcal N(0,I_d).
\label{eq:fixed-corruption}
\end{equation}

\begin{lemma}[Conditional Gaussian denoiser identity]
\label{lem:tweedie}
Under the interchange assumptions in Appendix~\ref{app:assumptions}, for almost every condition $c$,
\begin{equation}
\nabla_y\log p_Y(y\mid c)=\frac{\alpha\E[X\mid Y=y,C=c]-y}{\beta^2}.
\label{eq:tweedie-c}
\end{equation}
The analogous identity holds after conditioning on $Z=z$.
\end{lemma}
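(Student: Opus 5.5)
The plan is the standard Tweedie computation, carried out conditionally on $C=c$ using the regular conditional law $P(\dd x\mid c)$ from Appendix~\ref{app:assumptions}. The key fact is independence: since $\varepsilon\indep(X,C)$, conditioning on $C=c$ leaves $\varepsilon$ standard normal. Hence the conditional law of $Y$ given $C=c$ has the Lebesgue density
\begin{equation*}
p_Y(y\mid c)=\int \phi_\beta(y-\alpha x)\,P(\dd x\mid c),\qquad \phi_\beta(u)=(2\pi\beta^2)^{-d/2}\exp\!\left(-\norm{u}^2/(2\beta^2)\right).
\end{equation*}
This density is strictly positive and finite for every $y$, because $\phi_\beta$ is positive and bounded.

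First, I would differentiate under the integral sign, using the interchange assumption. It is in fact automatic here, since $\nabla_y\phi_\beta(y-\alpha x)=-\beta^{-2}(y-\alpha x)\phi_\beta(y-\alpha x)$ is bounded uniformly in $x$ on compact $y$-sets, because $\norm{u}e^{-\norm{u}^2/2\beta^2}$ is bounded. This gives
\begin{equation*}
\nabla_y p_Y(y\mid c)=\frac{1}{\beta^2}\int(\alpha x-y)\,\phi_\beta(y-\alpha x)\,P(\dd x\mid c).
\end{equation*}
Dividing by $p_Y(y\mid c)>0$ yields
\begin{equation*}
\nabla_y\log p_Y(y\mid c)=\frac{\alpha\int x\,\pi(\dd x\mid y,c)-y}{\beta^2},
\qquad
\pi(\dd x\mid y,c)=\frac{\phi_\beta(y-\alpha x)P(\dd x\mid c)}{p_Y(y\mid c)}.
\end{equation*}
The weighted integral of $x$ is finite by the same boundedness argument, even without moment assumptions.

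The main step is to identify the kernel $\pi(\cdot\mid y,c)$ as a regular version of the conditional law of $X$ given $(Y,C)=(y,c)$. This is Bayes' formula at the kernel level. For bounded measurable $f$ and $g$, I would check that
\begin{equation*}
\E[f(X)g(Y,C)]=\E\!\left[\int f\,\dd\pi(\cdot\mid Y,C)\,g(Y,C)\right].
\end{equation*}
To do so, disintegrate along $C$ and then use Fubini on the product of $P(\dd x\mid c)$ and the Gaussian density in $y$. With this identification, $\int x\,\pi(\dd x\mid y,c)=\E[X\mid Y=y,C=c]$ for almost every $(y,c)$. The ``almost every $c$'' qualifier comes from this null-set bookkeeping. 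Since both sides of Eq.~\eqref{eq:tweedie-c} are continuous in $y$ for the chosen version, the identity then holds for all $y$.

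For the $Z$ version, I would repeat the same argument verbatim with $P(\dd x\mid z)$. The only input needed is again $\varepsilon\indep(X,Z)$, which follows from $\varepsilon\indep(X,C)$ because $Z=E(C)$ is measurable.
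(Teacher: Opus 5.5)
Your proposal is correct and follows essentially the same route as the paper's proof. You write $p_Y(\cdot\mid c)$ as the Gaussian convolution of $P(\dd x\mid c)$, differentiate under the integral, and divide by the strictly positive density. Your extra steps make explicit what the paper's one-line equality uses implicitly: the uniform bound that makes the interchange automatic, and the kernel-level Bayes argument identifying the tilted measure with the regular conditional law of $X$ given $(Y,C)=(y,c)$.
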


\begin{proof}
The conditional noisy density is the Gaussian convolution
\begin{equation}
p_Y(y\mid c)=\int\varphi_\beta(y-\alpha x)P(\dd x\mid c),
\end{equation}
where $\varphi_\beta$ is the density of $\mathcal N(0,\beta^2I_d)$. Differentiating under the integral gives
\begin{align}
\nabla_y p_Y(y\mid c)
&=\int-\frac{y-\alpha x}{\beta^2}\varphi_\beta(y-\alpha x)P(\dd x\mid c)\nonumber\\
&=p_Y(y\mid c)\frac{\alpha\E[X\mid Y=y,C=c]-y}{\beta^2}.
\end{align}
Gaussian convolution makes $p_Y(y\mid c)$ strictly positive, so division proves the claim. This is the conditional form of the denoiser-score identity underlying denoising score matching.
\end{proof}

\begin{lemma}[A Gaussian denoiser identifies the noisy law]
\label{lem:score-identifies}
Let $p$ and $q$ be strictly positive smooth normalized densities on connected $\R^d$. If their Gaussian posterior-mean functions in Lemma~\ref{lem:tweedie} agree almost everywhere, then $p=q$ almost everywhere.
\end{lemma}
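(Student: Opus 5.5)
The plan is to pass from posterior means to scores, and from scores to densities. Write $m_p(y)$ and $m_q(y)$ for the two Gaussian posterior-mean functions. These are the conditional expectations $\E[X\mid Y=y]$ under the laws whose noisy densities are $p$ and $q$. By Lemma~\ref{lem:tweedie}, applied to each law,
\begin{equation*}
\nabla_y\log p(y)=\frac{\alpha m_p(y)-y}{\beta^2},\qquad \nabla_y\log q(y)=\frac{\alpha m_q(y)-y}{\beta^2}.
\end{equation*}

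The first step is to show that the scores agree everywhere. Agreement $m_p=m_q$ almost everywhere gives $\nabla\log p=\nabla\log q$ almost everywhere. Both scores are continuous, since $p$ and $q$ are smooth and strictly positive, so $\log p$ and $\log q$ are $C^1$. Two continuous functions that agree almost everywhere agree on a dense set, and hence everywhere.

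The second step is to conclude that $p$ and $q$ are proportional. Set $h=\log p-\log q$. Then $h$ is $C^1$ with $\nabla h\equiv 0$ on $\R^d$. Because $\R^d$ is connected (indeed path-connected), integrating $\nabla h$ along a segment from a base point $y_0$ gives $h(y)-h(y_0)=\int_0^1\nabla h(y_0+s(y-y_0))\cdot(y-y_0)\,\dd s=0$. So $h$ is a constant $\kappa$, and $p=e^{\kappa}q$.

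The third step uses normalization: integrating gives $1=e^{\kappa}$, so $\kappa=0$ and $p=q$ everywhere, in particular almost everywhere.

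The main obstacle is the regularity needed to upgrade almost-everywhere equality to a statement about smooth scores. The lemma's hypotheses supply smoothness and positivity. The only subtle point is that the posterior means themselves are only defined almost everywhere as conditional expectations. I would handle this by using the version of the means defined through the score formula. For Gaussian convolutions this version is continuous, being the ratio of $\int \alpha x\,\varphi_\beta(y-\alpha x)P(\dd x)$ to $p(y)$; both are continuous under the finite-moment and domination assumptions of Appendix~\ref{app:assumptions}. Alternatively, one can bypass pointwise arguments. Since $\nabla h=0$ almost everywhere and $h\in C^1$, $\nabla h$ vanishes identically, and the segment-integration argument above applies directly.
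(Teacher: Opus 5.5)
Your proposal is correct and follows the paper's proof: Lemma~\ref{lem:tweedie} turns equal posterior means into equal scores, connectedness makes $\log p-\log q$ constant, and normalization forces that constant to be zero. The only difference is cosmetic: you use continuity of the scores to upgrade almost-everywhere equality to everywhere equality and then integrate along segments, whereas the paper directly invokes a zero weak gradient on connected $\R^d$.
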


\begin{proof}
Lemma~\ref{lem:tweedie} turns equality of posterior means into $\nabla\log p=\nabla\log q$ almost everywhere. Hence the weak gradient of $\log p-\log q$ is zero and this difference is constant on connected $\R^d$. Therefore $p=e^Kq$ for a constant $K$. Normalization forces $e^K=1$.
\end{proof}

\begin{lemma}[Gaussian smoothing is injective]
\label{lem:deconvolution}
If the conditional laws of $Y=\alpha X+\beta\varepsilon$ agree under two conditions and $\alpha,\beta>0$, then the corresponding conditional laws of $X$ agree.
\end{lemma}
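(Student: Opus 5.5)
The plan is to go through characteristic functions, as already sketched in Eq.~\eqref{eq:charfun}. Let $\mu_1$ and $\mu_2$ denote the two conditional laws of $X$, for example $P(X\in\cdot\mid C=c)$ and $P(X\in\cdot\mid Z=z)$, viewed as probability measures on $\R^d$. Because $\varepsilon\sim\mathcal N(0,I_d)$ is independent of $(X,C)$, and hence of $X$ given either condition, the conditional law of $Y=\alpha X+\beta\varepsilon$ under condition $i$ is the convolution of the pushforward of $\mu_i$ under $x\mapsto\alpha x$ with $\mathcal N(0,\beta^2I_d)$.

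\textbf{Step 1.} Compute the characteristic function. Conditional independence factorizes the expectation as
\begin{equation*}
\varphi_{Y\mid i}(\omega)=\E_i\!\left[e^{i\omega^\top\alpha X}\right]\E\!\left[e^{i\beta\omega^\top\varepsilon}\right]=\varphi_{\mu_i}(\alpha\omega)\exp\!\left(-\tfrac12\beta^2\norm{\omega}^2\right).
\end{equation*}
No moment assumptions are needed, since characteristic functions of probability measures always exist.

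\textbf{Step 2.} Equality of the two noisy laws gives $\varphi_{Y\mid1}=\varphi_{Y\mid2}$ on all of $\R^d$. The Gaussian factor is strictly positive everywhere, so dividing by it yields $\varphi_{\mu_1}(\alpha\omega)=\varphi_{\mu_2}(\alpha\omega)$ for every $\omega$. Since $\alpha>0$, the map $\omega\mapsto\alpha\omega$ is a bijection of $\R^d$, so $\varphi_{\mu_1}=\varphi_{\mu_2}$ identically.

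\textbf{Step 3.} Lévy's uniqueness theorem states that a Borel probability measure on $\R^d$ is determined by its characteristic function. It therefore gives $\mu_1=\mu_2$.

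The only delicate point is measure-theoretic rather than analytic. The hypothesis ``the conditional laws of $Y$ agree'' should be read as equality of the regular conditional kernels at the given pair of conditions, or for almost every pair, as used in Theorem~\ref{thm:main-sufficiency}. The identity $\varphi_{Y\mid i}(\omega)=\varphi_{\mu_i}(\alpha\omega)e^{-\beta^2\norm{\omega}^2/2}$ must hold for the chosen versions of the kernels. I would handle this by noting that $\varepsilon\indep(X,C)$ implies $\varepsilon\indep(X,Z)$. The regular conditional law of $(X,\varepsilon)$ given $C$, or given $Z$, can then be taken to be the product kernel $P(\dd x\mid\cdot)\otimes\mathcal N(0,I_d)$ almost surely, and the computation in Step 1 holds for every condition outside a null set. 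This null set is harmless because the lemma is applied only almost surely.
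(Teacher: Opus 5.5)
Your proposal is correct and follows the paper's proof essentially step for step. Both arguments factor the noisy characteristic function as $\varphi_\mu(\alpha\omega)\exp(-\tfrac12\beta^2\norm{\omega}^2)$, divide by the nowhere-vanishing Gaussian factor, use $\alpha>0$ to cover all frequencies, and invoke uniqueness of characteristic functions. Your added remark, that product versions of the regular conditional kernels can be chosen outside a null set, makes explicit a point the paper leaves implicit.
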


\begin{proof}
For a conditional law $\mu$ of $X$, the characteristic function of $Y$ is
\begin{equation}
\varphi_Y(\omega)=\varphi_\mu(\alpha\omega)\exp\!\left(-\tfrac12\beta^2\norm{\omega}^2\right).
\end{equation}
The Gaussian multiplier never vanishes. Equality of the $Y$ laws therefore gives equality of $\varphi_\mu(\alpha\omega)$ for every $\omega$. Since $\alpha>0$, these arguments cover $\R^d$, and uniqueness of characteristic functions identifies the two clean laws.
\end{proof}

\begin{proof}[Proof of Theorem~\ref{thm:main-sufficiency}]
Suppose first that $\Delta_{\rm rep}^{x}(E)=0$. The integrand in Eq.~\eqref{eq:xpred-decomp} is nonnegative. Because the effective interior set $\cT_{\rm int}$ has positive $\rho$ measure, Fubini's theorem provides an interior time $t_*$ such that
\begin{equation}
m_C(X_{t_*},t_*,C)=m_Z(X_{t_*},t_*,Z)
\label{eq:denoisers-equal-rv}
\end{equation}
almost surely. For almost every $c$ and $z=E(c)$, the Gaussian conditional density of $X_{t_*}$ is positive on all of $\R^d$, so Eq.~\eqref{eq:denoisers-equal-rv} gives equality of the two denoiser functions for Lebesgue-almost every state. Lemmas~\ref{lem:tweedie} and~\ref{lem:score-identifies} imply
\begin{equation}
P(X_{t_*}\mid C=c)=P(X_{t_*}\mid Z=z).
\end{equation}
Lemma~\ref{lem:deconvolution} then gives $P(X\mid C=c)=P(X\mid Z=z)$ for almost every condition, proving posterior sufficiency.

Conversely, suppose $P(X\mid C)=P(X\mid Z)$ almost surely. This is equivalent to $X\indep C\mid Z$. Because $X_t$ is obtained from $X$ and independent source noise, the conditional joint kernel factors as
\begin{equation}
P(\dd c,\dd x,\dd x_t\mid z,t)=P(\dd c\mid z)P(\dd x\mid z)P(\dd x_t\mid x,t).
\end{equation}
It follows that $P(X\mid X_t,t,C,Z)=P(X\mid X_t,t,Z)$ and therefore $m_C=m_Z$ almost surely. Substitution into the definition of $\Delta_{\rm rep}^{x}$ makes it zero.

Finally, the equivalence between $P(X\mid C)=P(X\mid Z)$ and $I(X;C\mid Z)=0$ follows from Eq.~\eqref{eq:cmi-kernel} and nonnegativity of KL.
\end{proof}

\paragraph{Why $t=0$ alone is insufficient.} If $\alpha_0=0$, then $Y=\beta\varepsilon$ contains no scaled copy of $X$. The denoiser reduces to $\E[X\mid C]$ or $\E[X\mid Z]$, and the characteristic-function step cannot cover frequencies of $X$. Two different multimodal distributions can have the same mean, so equality at $t=0$ does not imply posterior sufficiency.

\section{Endpoint Correctness and the Multimodal Extension}
\label{app:endpoint-multimodal}

\subsection{Proof of conditional endpoint correctness}

Fix $E$ and $z$. Endpoint marginalization gives the path in Eq.~\eqref{eq:path-z}. Repeating the calculation in Appendix~\ref{app:risk-decompositions} with $C$ replaced by $Z$ shows that its conditional velocity is
\begin{equation}
u_Z(x_t,t,z)=\E[U_t\mid X_t=x_t,t,Z=z]
\end{equation}
and that $(p_t(\cdot\mid z),u_Z(\cdot,t,z))$ satisfies the conditional continuity equation. By assumption, the learned field equals $u_Z$ almost everywhere and the model ODE starts from the same common source. Uniqueness of the relevant continuity-equation or ODE flow therefore gives
\begin{equation}
q_t(\cdot\mid z)=p_t(\cdot\mid z)
\end{equation}
at every regular time. With $\alpha_1=1$ and $\sigma_1=0$, $p_1(\cdot\mid z)=P(X\mid Z=z)$, proving Eq.~\eqref{eq:endpoint-z}. Theorem~\ref{thm:main-sufficiency} supplies $P(X\mid Z)=P(X\mid C)$ when the representation deficiency is zero, proving Eq.~\eqref{eq:joint-optimum}.

If $\sigma_{\min}>0$, the same argument remains valid but identifies the smoothed endpoint
\begin{equation}
q_1(\cdot\mid z)=P(X\mid Z=z)*\mathcal N(0,\sigma_{\min}^2s_0^2I).
\end{equation}
The distinction is structural rather than a numerical detail: exact clean-posterior correctness requires a zero-noise endpoint or a separate deconvolution mechanism.

\subsection{Proof of joint multimodal sufficiency}

Let $\bX\in\R^D$ be the concatenation of all continuous modality latents, let $\bZ=E(\bC)$, and fix an interior time with invertible $\bA=\bA_t$ and positive-definite covariance $\boldsymbol\Sigma=\bS_t\bS_t^\top$. Write
\begin{equation}
\bY=\bA\bX+\boldsymbol\eta,\qquad \boldsymbol\eta\sim\mathcal N(0,\boldsymbol\Sigma),\qquad \boldsymbol\eta\indep(\bX,\bC).
\end{equation}
For a positive-definite loss matrix $W$, a zero multimodal representation term implies equality of conditional posterior means because $v^\top Wv=0$ only when $v=0$. The matrix Gaussian denoiser identity is
\begin{equation}
\nabla_y\log p_{\bY}(y\mid\bc)=\boldsymbol\Sigma^{-1}\left(\bA\E[\bX\mid\bY=y,\bC=\bc]-y\right),
\label{eq:mm-score}
\end{equation}
and similarly given $\bZ=\bz$. Equality of posterior means therefore yields equality of noisy scores and normalized noisy densities. The conditional characteristic function is
\begin{equation}
\varphi_{\bY\mid\bc}(\omega)=\varphi_{\bX\mid\bc}(\bA^\top\omega)\exp\!\left(-\tfrac12\omega^\top\boldsymbol\Sigma\omega\right).
\label{eq:mm-char}
\end{equation}
The Gaussian factor is nonzero and $\bA^\top$ is surjective, so equality of noisy laws identifies the entire joint characteristic function of $\bX$. This proves the forward direction of Corollary~\ref{cor:multimodal}; the reverse direction follows from the same conditional-independence factorization as in Theorem~\ref{thm:main-sufficiency}.

This argument would fail if separate losses identified only the marginal denoisers $\E[X^{(m)}\mid X_t^{(m)},Z]$ without conditioning on the joint noisy state. The Drift Variation autoencoder multimodal decoder receives the joint state $\bX_t$ and predicts the joint clean tuple, so the recovered score belongs to the smoothed joint density and preserves cross-modal dependence.

\subsection{Any-subset and unconditional branches}

Let $S\subseteq\{1,\ldots,M\}$ be the set of observed modalities and include $S$ in $\bC$. The same model then represents the family
\begin{equation}
P\!\left(X^{(1)},\ldots,X^{(M)}\mid \{A_m(X^{(m)}):m\in S\},S\right)
\label{eq:any-subset}
\end{equation}
over masks sampled during training. If $S=\varnothing$ is chosen independently of the data and produces a constant observation, Eq.~\eqref{eq:any-subset} reduces to the unconditional joint data law. If one modality is fully observed and another is hidden, it becomes cross-modal conditional generation. If every modality is partially observed, it becomes joint multimodal completion.

The observation mask is legitimate side information because it specifies which conditional distribution is intended. In contrast, passing hidden clean tokens or a learned encoding of them directly to the decoder would change the conditioning sigma-field and invalidate a claim that $\bZ$ alone is sufficient.

\section{Practical Implications and Broader Evaluation}
\label{app:practical}
\label{app:evaluation-details}

\subsection{Time weighting and representation pressure}

For the affine path, exact velocity regression corresponds to $w_{\rm vel}(t)=\sigma_t^{-2}$. With $\sigma_{\min}=0$, this weight diverges at the endpoint. Clipping it as
\begin{equation}
w_{\rm clip}(t)=\min\!\left(\sigma_t^{-2},w_{\max}\right)
\end{equation}
changes the numerical functional but not the pointwise Bayes clean predictor wherever the weight remains positive. Uniform, clipped, and exact weighting therefore share the ideal sufficient-encoder zero set under the theorem's support and integrability assumptions, while their finite-sample variance and optimization can differ substantially.

At $t=0$, the path state contains no clean signal and the denoiser compares only $\E[X\mid C]$ with $\E[X\mid Z]$. At an identifying interior time, the complete denoiser function identifies the posterior. Near a zero-noise endpoint, $X_t$ nearly reveals $X$, so the observed condition-use gap can shrink even for a useful representation. Time-resolved correct/shuffled/zero losses are thus more informative than a single time-averaged number.

\subsection{Source noise, augmentation entropy, and bypasses}

The common source $X_0\indep C$ both prevents condition leakage through the initial state and supplies residual sampling randomness. A condition-dependent source changes the conditioning sigma-field. Likewise, any direct crop geometry, raw clean token, calibration variable, or task identifier passed to the decoder becomes part of the representation certified by the theorem. The path state $X_t$ and time are intrinsic Flow Matching inputs; a separate clean copy of the observation is not.

Observation strength and flow time play different roles. Stronger masking or modality removal changes $P(X\mid C)$ and typically raises its entropy, whereas flow time changes the signal-to-noise ratio along a fixed posterior path. A full-mask branch is unconditional only when the mask event is independent of sample content. Multiple source draws reduce Monte Carlo noise and reveal posterior diversity, but they cannot recover modes missing from the dataset.

\subsection{Finite capacity and endpoint numerics}

With an unrestricted decoder, profiling the clean-prediction risk leaves only the fixed path term and representation deficiency. With finite networks, the model term depends on the encoder and joint training can prefer a lossy but easier code. Encoder token count, decoder cross-attention, width, and depth must therefore be ablated jointly. For $\sigma_{\min}=0$, training and integration should avoid evaluating the clean-prediction parameterization exactly at $t=1$. Clamping intermediate states changes the continuity equation; display clipping or manifold projection should occur only after integration unless the modified dynamics are analyzed explicitly.

\subsection{Natural-data evaluation protocol}

CrossGeom-4 tests the mechanism but not semantic transfer. A natural-image evaluation should pair frozen linear, $k$-nearest-neighbor, few-shot, and dense probes with FID, precision/recall, condition fidelity, and repeated-sample diversity from the \emph{same checkpoint}. High-entropy contexts require distributional metrics, best-of-$K$ fidelity, and consistency with visible evidence rather than single-output MSE alone. A conditional mean can score well under MSE while missing the posterior.

For multimodal systems, every observed subset should be evaluated against the complete generated tuple. Report per-modality fidelity, visible-stream reconstruction, cross-modal consistency, cross-modal retrieval, and dense probes. Repeated samples should change unobserved variables while preserving observed evidence. Marginal quality is insufficient: a compatibility metric or known factor evaluator must test whether simultaneously generated outputs share the same residual draw.

\begin{table}[t]
\centering
\small
\caption{Minimum information for evaluating a Drift Variation autoencoder instance.}
\label{tab:reporting}
\begin{tabular}{@{}L{0.24\linewidth}L{0.68\linewidth}@{}}
\toprule
Category & Required report \\
\midrule
Observation & Augmentations, strengths, mask law, content dependence, and all declared side information. \\
Representation & Encoder/token architecture, pooling, frozen and nonlinear probes, rank statistics, and checkpoint rule. \\
Flow & Source, schedules, time law, weighting/clipping, endpoint noise, solver, steps, and final-time handling. \\
Generation & Fidelity, precision/recall, condition consistency, repeated-sample diversity, and sample count. \\
Multimodal & Visible subsets, full generated targets, per-modality fidelity, visible reconstruction, and joint consistency. \\
Ablations & Shuffled/zero condition, source construction, time allocation, bottleneck size, and joint versus independent targets. \\
\bottomrule
\end{tabular}
\end{table}

No single finite-network metric validates Theorem~\ref{thm:main-sufficiency}. The empirical evidence should instead follow the decomposition: the encoder contains condition-specific predictive information, the decoder uses it, the source accounts for residual uncertainty, outputs preserve joint dependence, and representation and generation coexist at one declared checkpoint.

\section{CrossGeom-4 Details and Complete Results}
\label{app:crossgeom}

\subsection{Data construction and architecture}

For every example, the four factors follow
\begin{equation}
u_j\sim\tfrac12\mathcal N(-1.5,0.25^2)+
          \tfrac12\mathcal N( 1.5,0.25^2),
\qquad j\in\{0,AB,AC,BC\}.
\end{equation}
Each three-factor modality is mixed by a fixed seeded orthogonal matrix. L2 applies invertible $\operatorname{asinh}$ coordinates after mixing. L3 maps the $B$ chart to $\mathbb S^3$ by inverse stereographic projection and maps the $C$ chart to the upper hyperboloid by $y\mapsto(\sqrt{1+\lVert y\rVert^2},y)$. Exact inverse maps permit evaluation of latent factors without a learned perceptual model.

The encoder uses modality-specific scalar stems, modality and coordinate embeddings, one global token, three Transformer layers, width 96, and four attention heads. Unavailable modality tokens are padded. The decoder has four Transformer layers. The joint model self-attends across the concatenated A, B, and C target tokens; the independent baseline uses one decoder per modality.

All eight availability patterns occur equally often. Availability affects only the encoder. Under every pattern, including A-only and ABC, all three modalities receive independent Gaussian source states and are clean-prediction targets. The affine path uses $\sigma_{\min}=0$ and clipped $\sigma_t^{-2}$ weighting with cap 100. AdamW uses learning rate $3\times10^{-4}$, batch size 256, and 5,000 updates. Evaluation uses fixed-step Euler integration with 32 steps, 128 conditions, and 16 samples per condition. Every level/decoder setting uses seeds 0, 1, and 2. Tables report standard deviations across the three seeds, computed with denominator $n=3$.

\subsection{Complete aggregate tables}

\begin{table*}[t]
\centering
\scriptsize
\caption{Conditional factor MAE and unconditional metrics under full-tuple Flow Matching.
Every conditional query generates A, B, and C; direction labels identify the visible condition, not the only active targets.}
\label{tab:crossgeom-complete}
\resizebox{\textwidth}{!}{\begin{tabular}{llcccccccc}
\toprule
Level & Decoder & Uncond. TV$\downarrow$ & Uncond. joint MAE$\downarrow$ & A$\to$BC$\downarrow$ & B$\to$AC$\downarrow$ & C$\to$AB$\downarrow$ & AB$\to$C$\downarrow$ & AC$\to$B$\downarrow$ & BC$\to$A$\downarrow$ \\
\midrule
1 & independent & 0.0892 $\pm$ 0.0111 & 1.6462 $\pm$ 0.0460 & 0.1147 $\pm$ 0.0044 & 0.1177 $\pm$ 0.0072 & 0.1173 $\pm$ 0.0113 & 0.0962 $\pm$ 0.0029 & 0.0950 $\pm$ 0.0081 & 0.0943 $\pm$ 0.0006 \\
1 & joint & 0.0990 $\pm$ 0.0128 & 0.1847 $\pm$ 0.0096 & 0.1123 $\pm$ 0.0093 & 0.1212 $\pm$ 0.0076 & 0.1193 $\pm$ 0.0064 & 0.0990 $\pm$ 0.0124 & 0.1011 $\pm$ 0.0136 & 0.0966 $\pm$ 0.0096 \\
2 & independent & 0.0724 $\pm$ 0.0022 & 1.6674 $\pm$ 0.0426 & 0.1197 $\pm$ 0.0132 & 0.1132 $\pm$ 0.0130 & 0.1105 $\pm$ 0.0092 & 0.0975 $\pm$ 0.0153 & 0.0952 $\pm$ 0.0112 & 0.0831 $\pm$ 0.0116 \\
2 & joint & 0.1082 $\pm$ 0.0186 & 0.1771 $\pm$ 0.0116 & 0.1055 $\pm$ 0.0040 & 0.1204 $\pm$ 0.0118 & 0.1203 $\pm$ 0.0119 & 0.0923 $\pm$ 0.0026 & 0.0968 $\pm$ 0.0009 & 0.1061 $\pm$ 0.0202 \\
3 & independent & 0.0542 $\pm$ 0.0132 & 1.5977 $\pm$ 0.0220 & 0.1260 $\pm$ 0.0018 & 0.1543 $\pm$ 0.0112 & 0.1126 $\pm$ 0.0064 & 0.1150 $\pm$ 0.0083 & 0.0952 $\pm$ 0.0006 & 0.1072 $\pm$ 0.0090 \\
3 & joint & 0.0861 $\pm$ 0.0172 & 0.2133 $\pm$ 0.0164 & 0.1295 $\pm$ 0.0178 & 0.1583 $\pm$ 0.0143 & 0.1137 $\pm$ 0.0022 & 0.1196 $\pm$ 0.0156 & 0.1196 $\pm$ 0.0200 & 0.1127 $\pm$ 0.0079 \\
\bottomrule
\end{tabular}
}
\end{table*}

\begin{table*}[t]
\centering
\scriptsize
\caption{Representation, visible reconstruction, uncertainty, and condition-use evidence. ``Both modes'' is the fraction of contexts whose 16 samples contain both signs of the shared unknown factor. ``Shuffle'' is shuffled-condition MAE divided by matched-condition MAE with identical initial ODE noise.}
\label{tab:crossgeom-evidence}
\resizebox{\textwidth}{!}{\begin{tabular}{llccccccccc}
\toprule
Level & Decoder & Probe $R^2\uparrow$ & Random R$^2$ & Visible recon.$\downarrow$ & Single cond.$\downarrow$ & Double cond.$\downarrow$ & Unknown joint$\downarrow$ & Unknown W1$\downarrow$ & Both modes$\uparrow$ & Shuffle $\times\uparrow$ \\
\midrule
1 & independent & 0.9991 $\pm$ 0.0001 & 0.8227 $\pm$ 0.0858 & 0.0840 $\pm$ 0.0038 & 0.1166 $\pm$ 0.0059 & 0.0951 $\pm$ 0.0033 & 1.6397 $\pm$ 0.0432 & 0.6776 $\pm$ 0.0117 & 1.0000 $\pm$ 0.0000 & 15.6921 $\pm$ 0.6424 \\
1 & joint & 0.9992 $\pm$ 0.0000 & 0.8227 $\pm$ 0.0858 & 0.0918 $\pm$ 0.0066 & 0.1176 $\pm$ 0.0076 & 0.0989 $\pm$ 0.0114 & 0.1186 $\pm$ 0.0099 & 0.2067 $\pm$ 0.0539 & 1.0000 $\pm$ 0.0000 & 15.2957 $\pm$ 1.5546 \\
2 & independent & 0.9992 $\pm$ 0.0000 & -13.2173 $\pm$ 17.3786 & 0.0809 $\pm$ 0.0089 & 0.1145 $\pm$ 0.0104 & 0.0919 $\pm$ 0.0117 & 1.6278 $\pm$ 0.0313 & 0.6878 $\pm$ 0.0109 & 1.0000 $\pm$ 0.0000 & 16.4248 $\pm$ 1.9564 \\
2 & joint & 0.9992 $\pm$ 0.0000 & -13.2173 $\pm$ 17.3786 & 0.0912 $\pm$ 0.0069 & 0.1154 $\pm$ 0.0089 & 0.0984 $\pm$ 0.0078 & 0.1201 $\pm$ 0.0107 & 0.2669 $\pm$ 0.0309 & 1.0000 $\pm$ 0.0000 & 15.6814 $\pm$ 0.9940 \\
3 & independent & 0.9992 $\pm$ 0.0000 & 0.7396 $\pm$ 0.2158 & 0.0910 $\pm$ 0.0050 & 0.1310 $\pm$ 0.0060 & 0.1058 $\pm$ 0.0054 & 1.5651 $\pm$ 0.0261 & 0.7143 $\pm$ 0.0133 & 0.9991 $\pm$ 0.0012 & 14.1745 $\pm$ 0.6261 \\
3 & joint & 0.9990 $\pm$ 0.0001 & 0.7396 $\pm$ 0.2158 & 0.1041 $\pm$ 0.0069 & 0.1338 $\pm$ 0.0099 & 0.1173 $\pm$ 0.0122 & 0.1542 $\pm$ 0.0080 & 0.1648 $\pm$ 0.0516 & 1.0000 $\pm$ 0.0000 & 13.5004 $\pm$ 1.1064 \\
\bottomrule
\end{tabular}
}
\end{table*}

The negative mean random-feature $R^2$ on L2 is a valid out-of-sample result, not a clipped value. Random high-dimensional features make the fixed ridge probe poorly conditioned and coordinate dependent. Trained representations, in contrast, remain stable across levels, and factors unavailable under a single observed modality remain near chance.

\subsection{Oracle construction and metric interpretation}

The conditional oracle copies every factor determined by the visible modalities and independently resamples each undetermined factor from its true mixture. It then renders the complete A/B/C tuple. Consequently, condition-to-target error has a deterministic zero oracle, whereas an undetermined factor is evaluated distributionally rather than against one arbitrary ground-truth draw. The unconditional oracle samples all four factors and provides a finite-sample TV calibration of $0.0339\pm0.0003$ for 2,048 samples.

Both decoder structures attain essentially perfect two-sign coverage with 16 draws per context. This marginal metric cannot establish joint generation. The unknown-factor target-target MAE tests whether two output modalities use the same posterior draw, and it is the metric on which joint and independent decoders separate by roughly one order of magnitude. The W1 metric complements it by comparing the averaged recovered factor to the exact conditional marginal.

Shuffling preserves the empirical support of each observed modality and reuses the same initial source noise, making it the primary condition-use test. A zero condition is also recorded in the long-form results but is an out-of-support intervention and is not used for headline effect sizes. All headline values, the aggregate JSON, and the long-form CSV are included with the source package.

\end{document}